\documentclass[journal, 10pt]{IEEEtran}

\usepackage{geometry}

\geometry{
    top=1in, 
    bottom=1in, 
    left=1in, 
    right=1in, 
}

\usepackage[utf8]{inputenc} 
\usepackage[T1]{fontenc}    
\usepackage{hyperref}       
\usepackage{url}            
\usepackage{booktabs}       
\usepackage{amsfonts}       
\usepackage{nicefrac}       
\usepackage{microtype}      
\usepackage{lipsum}
\usepackage{fancyhdr} 
\usepackage{graphicx}  
\usepackage{subcaption}  
\usepackage[cmex10]{amsmath} 
\usepackage[T1]{fontenc}
\usepackage{ifthen}
\usepackage{cite}
\usepackage{color}
\usepackage[dvipsnames]{xcolor}
\usepackage{pifont} 
\usepackage{latexsym}
\usepackage{amssymb,mathtools,amsthm}
\usepackage{amsfonts}
\usepackage{dsfont}
\usepackage{amsopn}
\usepackage{array}
\def\epsilon{\varepsilon}
\def\eps{\varepsilon}
\usepackage{algorithm}
\usepackage{algpseudocode}
\usepackage{graphicx}

\DeclareMathOperator*{\argmin}{arg\,min}  
\DeclareMathOperator*{\argmax}{arg\,max}


\interdisplaylinepenalty=2500 



\newtheorem{theorem}{Theorem}
\newtheorem{lemma}[theorem]{Lemma}

\newtheorem{definition}{Definition}

\allowdisplaybreaks

\begin{document}
\onecolumn
\title{Optimal Fairness under Local Differential Privacy
%
}
\author{
  Hrad Ghoukasian and Shahab Asoodeh \\
  Department of Computing and Software, McMaster University \\
  \texttt{ \{ghoukash, asoodeh\}@mcmaster.ca} \\
}

\maketitle


\begin{abstract}
We investigate how to optimally design local differential privacy (LDP) mechanisms that reduce data unfairness and thereby improve fairness in downstream classification. We first derive a closed-form optimal mechanism for binary sensitive attributes and then develop a tractable optimization framework that yields the corresponding optimal mechanism for multi-valued attributes. As a theoretical contribution, we establish that for discrimination–accuracy optimal classifiers, reducing data unfairness necessarily leads to lower classification unfairness, thus providing a direct link between privacy-aware pre-processing and classification fairness.

Empirically, we demonstrate that our approach consistently outperforms existing LDP mechanisms in reducing data unfairness across diverse datasets and fairness metrics, while maintaining accuracy close to that of non-private models. Moreover, compared with leading pre-processing and post-processing fairness methods, our mechanism achieves a more favorable accuracy–fairness trade-off while simultaneously preserving the privacy of sensitive attributes. Taken together, these results highlight LDP as a principled and effective pre-processing fairness intervention technique.


\end{abstract}

\section{Introduction}\label{sec: introduction}

Machine learning algorithms are increasingly used in high-stakes decision-making tasks, making it essential to ensure fairness in their outcomes. Fairness in machine learning aims to prevent models from discriminating against demographic groups characterized by \textit{sensitive} attributes such as gender or race. In practice, fairness is context-dependent, and no single definition applies universally. Metrics such as statistical parity \cite{feldman2015certifying} and equalized opportunity \cite{hardt2016equality} formalize different ways of assessing whether model decisions remain independent of sensitive attributes.
Beyond fairness, privacy is another core component of trustworthy machine learning, as it protects individuals’ data in training datasets. Differential privacy (DP) \cite{dwork2006calibrating,dwork2006differential}  is the de facto standard for providing formal privacy guarantees in
machine learning. DP is studied in both the central model, where a trusted curator is assumed, and the local DP (LDP) model, which assumes no trusted entity. Prior work shows that central DP can worsen fairness and may even be fundamentally incompatible with it \cite{bagdasaryan2019differential,ganev2022robin,pujol2020fair,farrand2020neither,cummings2019compatibility}. In contrast, the interaction between fairness and LDP remains largely unexplored, and no analogous incompatibility result is known.

Widely deployed systems using LDP algorithms have been implemented by companies such as Google, Apple, and Microsoft \cite{erlingsson2014rappor, Apple_Privacy, ding2017collecting}. Although some work has examined the intersection of fairness and LDP, key challenges remain. For instance, \cite{mozannar2020fair} proposes a learning scheme for training non-discriminatory classifiers with theoretical guarantees when only a privatized version of the sensitive attribute is available. Their approach employs an $\epsilon$-LDP mechanism—generalized randomized response (GRR) \cite{kairouz2014extremal}. Similarly, \cite{chen2022fairness} considers a semi-private setting where a small subset of users reveal their sensitive attributes, while the rest apply an $\epsilon$-LDP protocol. While these frameworks provide approaches for fair prediction with LDP-privatized data, they either focus exclusively on binary sensitive attributes or limit themselves to using GRR as the LDP mechanism, highlighting the need for further exploration.

Recently, several works have begun examining the impact of LDP on fairness in classification \cite{arcolezi2025group,makhlouf2024systematic,makhlouf2024impact}. For instance, \cite{arcolezi2025group} empirically analyzed how different LDP mechanisms affect fairness and utility in binary classification, showing that privatizing sensitive attributes can improve fairness with minimal utility loss compared to training on non-private data. \cite{makhlouf2024impact} showed that stronger privacy enhances fairness and that privatizing multiple sensitive attributes reduces unfairness more effectively than privatizing one. Complementing these empirical results, \cite{makhlouf2024systematic} provided a theoretical analysis of how randomized response (RR) \cite{warner1965randomized}, the binary analogue of GRR, affects fairness under different privacy levels and data distributions. Their analysis identifies conditions under which privacy can either mitigate or exacerbate unfairness. Moreover, \cite{xie2024privacy} showed that applying LDP during pre-processing limits sensitive information leakage and improves fairness. 

Collectively, these findings contrast with results in the central DP setting, where stronger privacy is often linked to reduced fairness. Instead, they suggest that LDP may serve as a promising avenue for fairness amplification. This motivates further study of how to optimally perturb sensitive attributes—beyond RR and GRR—to improve fairness for both binary and non-binary attributes. Overall, applying an LDP mechanism can be viewed as a pre-processing strategy for enhancing fairness while preserving utility.


Fairness intervention methods are applied at different stages to mitigate bias: pre-processing before training \cite{calders2009building, wang2019repairing, kamiran2012data, celis2020data, calmon2017optimized, hajian2012methodology,chakraborty2021bias,peng2022fairmask,gohar2023towards,madras2018learning,zemel2013learning,soen2023fair,yu2024fairbalance}, in-processing during model training \cite{lowy2021stochastic,cho2020fairdensity,cho2020fairmutual,jiang2020wasserstein,mary2019fairness,prost2019toward,zhang2018mitigating,agarwal2018reductions}, and post-processing after generating predictions \cite{wei2020optimized,wei2021optimized,chzhen2019leveraging,pleiss2017fairness,kim2020fact,jiang2020identifying,yang2020fairness,alghamdi2022beyond}. Among these, pre-processing methods offer the most flexibility within the data science pipeline, as they operate independently of the modeling algorithm \cite{calmon2017optimized}. Our approach aligns closely with the pre-processing methods. 

Building on recent evidence that LDP can promote fairness \cite{arcolezi2025group, makhlouf2024systematic, makhlouf2024impact} and motivated by the flexibility of pre-processing methods, we study how to design the optimal LDP mechanism to reduce unfairness before model training. Specifically, we formulate the problem of identifying an LDP-based pre-processing mechanism that minimizes data unfairness. We further provide a theoretical analysis showing how reducing data unfairness translates to improved classification fairness. Finally, we validate the effectiveness of the optimal mechanisms through extensive experiments across diverse datasets and fairness metrics.

More precisely, our contributions are as follows:
\begin{itemize}
 
  \item We introduce the problem of designing LDP mechanisms that minimize data unfairness for both binary and non-binary sensitive attributes while preserving non-trivial utility. For the binary case, we derive a closed-form optimal solution, and for the non-binary case, we reformulate the problem as a min–max linear fractional program solvable via the branch-and-bound method~\cite{jiao2022solving}.

\item We theoretically show that for a class of discrimination–accuracy optimal classifiers, training on less discriminatory data yields lower post-classification unfairness. This result provides theoretical justification for our objective of minimizing data unfairness.

\item Through extensive experiments on multiple datasets and fairness metrics, we validate the effectiveness of our optimal mechanisms in reducing unfairness while maintaining utility comparable to state-of-the-art LDP methods. Moreover, our mechanisms achieve a better accuracy–fairness trade-off than both pre- and post-processing fairness interventions, while preserving the privacy of sensitive attributes.

\end{itemize}

The remainder of this paper is structured as follows. Section \ref{sec: related_works} reviews the related work. In Section \ref{sec: notation and definition}, we introduce the key notation and definitions used throughout the paper, along with an overview of common LDP mechanisms. Section \ref{sec: technical_results} defines the problem we address and presents the optimal LDP mechanism for mitigating data unfairness, along with the corresponding theoretical results. Section \ref{sec: experiments} describes the experimental setup and validates the theoretical findings. Section \ref{sec: conclusion} wraps up the paper with a discussion of the key contributions and findings.

\section{Related Work}\label{sec: related_works}

We categorize the related work into three main groups. The first group includes studies that explore the intersection of LDP and fairness. The second focuses on studies that explore how central DP impacts fairness and vice versa. The third group contains papers that focus on pre-processing methods with the goal of improving fairness.

\textbf{LDP and Fairness:}
Several studies have examined the impact of LDP on fairness in classification. \cite{arcolezi2025group} empirically shows that applying LDP to sensitive attributes improves fairness with minimal utility loss compared to non-private data, highlighting that GRR and subset selection (SS) \cite{wang2016mutual} offer the best accuracy-fairness-privacy trade-offs among state-of-the-art LDP mechanisms. Expanding on this, \cite{makhlouf2024impact} experimentally demonstrates that stronger privacy (i.e., lower $\epsilon$) further enhances fairness, with greater reductions in disparity when LDP is applied to multiple sensitive attributes. They subsequently delineate this observation by characterizing conditions under which RR implies lower unfairness for binary sensitive attributes \cite{makhlouf2024systematic}. Other works focus on learning frameworks with privatized sensitive attributes. For instance, \cite{mozannar2020fair} adapts non-discriminatory classifiers to work with privatized attributes using GRR, offering theoretical performance guarantees on utility and fairness. Similarly, \cite{chen2022fairness} considers the scenario of semi-private sensitive attributes. Moreover, \cite{xie2024privacy} shows that incorporating LDP randomizers
during encoding can enhance fairness in representation learning while preserving utility. The work most closely related to ours is \cite{makhlouf2024systematic}, as they also theoretically analyze the impact of LDP on fairness. However, their study is limited to binary sensitive attributes and a specific LDP mechanism (RR) only. 

\textbf{Central DP and Fairness:}
\cite{bagdasaryan2019differential} empirically shows that central differential privacy, specifically differentially private stochastic gradient descent (DP-SGD) \cite{abadi2016deep}, has a disparate impact on accuracy of different subgroups. Similarly, \cite{ganev2022robin} demonstrates that applying DP to synthetic data generation disproportionately affects minority sub-populations. \cite{pujol2020fair} also finds that applying DP can exacerbate unfairness in decision-making tasks. \cite{farrand2020neither} further shows that even with loose privacy guarantees and minimal data imbalance, DP-SGD can lead to disparate impact. \cite{agarwal_2020} and \cite{cummings2019compatibility} introduce an incompatibility result, highlighting that differential privacy and fairness are at odds when aiming for non-trivial accuracy in learning algorithms. Recently, \cite{ko2024fairness} examined the impact of central DP on fairness beyond classification tasks. They demonstrated that using DP to set sampling rates for collecting socio-demographic data reduces unfairness across different population segments in resource allocation.
Additionally, some works explore the intersection of privacy and fairness beyond the scope of central DP or group fairness. For instance, \cite{chang2021privacy} examines the privacy risks of achieving group fairness, showing that fairness may compromise privacy, particularly through membership inference attacks, even when differential privacy is not explicitly applied. \cite{dwork2012fairness} provides a theoretical link between individual fairness and DP, arguing that individual fairness can be seen as a generalization of DP and outlining conditions under which a DP mechanism ensures individual fairness. 

\textbf{Pre-processing Techniques for Fairness:}
Pre-processing techniques for bias mitigation in fairness literature involve making changes to training data. Some methods modify values within the training data, such as altering ground truth labels (relabeling) \cite{calders2009building,hajian2012methodology,kamiran2012data} or adjusting other features (perturbation) \cite{wang2019repairing}. Methods such as \cite{kamiran2012data,yu2024fairbalance} mitigate bias through reweighting of existing samples, while \cite{wang2019repairing} perturbs the input distribution for disadvantaged groups to create a counterfactual distribution, particularly targeting binary sensitive attributes.
Sampling methods adjust training data by changing sample distributions (e.g., adding or removing samples) or adapting their influence on training \cite{chakraborty2021bias,celis2020data}. \cite{celis2020data} proposes an optimization-based framework to learn distributions over the data domain that stay close to the empirical distribution. This technique is applicable to datasets with discrete or categorical attributes, both sensitive and non-sensitive. Other methods augment training data with additional, ideally unbiased features \cite{madras2018learning}. Finally, some techniques learn transformations of the training data to reduce bias while retaining as much information as possible \cite{zemel2013learning,calmon2017optimized}. \cite{calmon2017optimized} introduces an optimization algorithm that modifies non-sensitive features and labels while keeping sensitive attributes unchanged, focusing on datasets with categorical or discrete attributes. Our method aligns with perturbation-based approaches, as we perturb the sensitive attribute using LDP.


\section{Notation and Definitions}\label{sec: notation and definition}

\subsection{Notation}\label{subsec: notation}
We consider a binary classification setting with joint distribution \( P_{XAY} \) over the triplet \( T = (X, A, Y) \), where \( X \in \mathcal{X} \subset \mathbb{R}^d \) denotes the non-sensitive features, \( A \) the sensitive attribute, and \( Y \in \{0,1\} \) the target label. The sensitive attribute may be binary (\( A \in \{0,1\} \)) or non-binary (\( A \in \{1,2,\ldots,k\} \), \( k \ge 2 \)). We introduce \( Z \), a perturbed version of \( A \) obtained via an LDP mechanism, such that the triplet \( (X, A, Y) \) becomes \( (X, Z, Y) \) after perturbation.  
For the distribution \( P_{XAY} \), or a dataset \( D \) consisting of i.i.d. samples drawn from it, let \( p_i = \Pr(A = i) \) and \( p_{1|i} = \Pr(Y = 1 \mid A = i) \). We denote by \( P_Y \) the marginal distribution of \( Y \) under the joint distribution \( P_{XAY} \).
In practice, since the true data distribution is unknown, we use \( D \) and \( P_{XAY} \) interchangeably, defining all metrics analogously for both. We denote the index set \( \{1, 2, \ldots, k\} \)  by \( [k] \). 


\subsection{Fairness Metrics}\label{subsec: group fairness metrics}

In this section, we define two data unfairness metrics, \( \Delta \) and \( \Delta' \), as well as three classification fairness metrics: statistical parity, equalized opportunity, and (mean) equalized odds.

\begin{definition}[Data unfairness $\Delta$\cite{calmon2017optimized}]\label{def: data unfairness}
Let \( X \in \mathcal{X} \), \( A \in [k] \), and \( Y \in \{0,1\} \) denote non-sensitive features, the sensitive attribute, and the label with joint distribution \( P_{XAY} \). Each sample \((x_i, a_i, y_i)\) in dataset \( D \) is drawn i.i.d.\ from \( P_{XAY} \).
The data unfairness metric \(\Delta(D)\) associated with \(D\) or \(P_{XAY}\) is defined as:
\begin{equation*}
    \Delta(D) := \max_{a \in [k]} \left| \frac{\Pr(Y = 1 \mid A = a)}{\Pr(Y = 1)} - 1 \right|.
\end{equation*}
\end{definition}

\begin{definition}[Data unfairness $\Delta'$ \cite{kamiran2012data}]\label{def: data unfairness'}
Let \( X \in \mathcal{X} \), \( A \in [k] \), and \( Y \in \{0,1\} \) denote non-sensitive features, the sensitive attribute, and the label with joint distribution \( P_{XAY} \). Each sample \((x_i, a_i, y_i)\) in dataset \( D \) is drawn i.i.d.\ from \( P_{XAY} \). 
The data unfairness metric \(\Delta'(D)\) associated with \(D\) or \(P_{XAY}\) is defined as:
\begin{equation*}
    \Delta'(D) := \max_{a,a' \in [k]} \Bigl| \Pr(Y = 1 \mid A = a) - \Pr(Y = 1 \mid A = a')\Bigr|.
\end{equation*}
\end{definition}

Both data unfairness metrics, \( \Delta \) and \( \Delta' \), capture the dependence (or independence) of the actual labels on the sensitive attribute. It can be shown that \( \Delta(D) \leq c_1 \Delta'(D) \) and \( \Delta'(D) \leq c_2 \Delta(D) \) for some constants \( c_1 \) and \( c_2 \) that depend on the marginal distribution $P_Y$ of the joint distribution \( P_{XAY} \). (see Lemma \ref{lemma: data metrics are equivalent}). Therefore, the metrics \( \Delta \) and \( \Delta' \) are essentially the same. 
However, for technical reasons that will become clear in the following sections, we use Definition \ref{def: data unfairness'} to formulate the optimal LDP-based pre-processing problem for binary sensitive attributes in Section \ref{subsec:optimal binary_A} and to establish the relationship between data unfairness and classification unfairness in Section \ref{subsec: data fairness leads to classification fairness}. In Section \ref{subsec: optimal non-binary_A}, we utilize Definition \ref{def: data unfairness} to formulate the optimal LDP-based pre-processing problem for non-binary sensitive attributes.

\begin{definition}[Statistical parity \cite{feldman2015certifying}]\label{def: statistical par}  
Let \(X \in \mathcal{X}\), \(A \in [k]\), and \(Y \in \{0, 1\}\) be random variables representing non-sensitive features, sensitive attributes, and labels, respectively, with a joint distribution \(P_{XAY}\). Let \(\hat{Y} = \hat{h}(X, A)\) be a binary classifier, where \(\hat{h} : \mathcal{X} \times [k] \to \{0, 1\}\). The statistical parity gap \(\Delta_{\mathrm{SP}}(\hat{h})\) associated with the classifier \(\hat{h}\) and distribution \(P_{XAY}\) is defined as:
\begin{equation*}
    \Delta_{\mathrm{SP}}(\hat{h}) := \max_{a,a' \in [k]} \left| \Pr(\hat{Y} = 1 \mid A = a) - \Pr(\hat{Y} = 1 \mid A = a') \right|.
\end{equation*}

\end{definition}

\begin{definition}[Equalized opportunity \cite{hardt2016equality}]
Let \(X \in \mathcal{X}\), \(A \in [k]\), and \(Y \in \{0, 1\}\) be random variables representing non-sensitive features, sensitive attributes, and labels, respectively, with a joint distribution \(P_{XAY}\). Let \(\hat{Y} = \hat{h}(X, A)\) be a binary classifier, where \(\hat{h} : \mathcal{X} \times [k] \to \{0, 1\}\). The equalized opportunity gap \(\Delta_{\mathrm{EO}}(\hat{h})\) associated with the classifier \(\hat{h}\) and distribution \(P_{XAY}\) is defined as:
\begin{equation*}
    \Delta_{\mathrm{EO}}(\hat{h}) := \max_{a,a'\in[k]}
   \bigl|
   \mathrm{TPR}_{A=a}
   - \mathrm{TPR}_{A=a'}
   \bigr|,
\end{equation*}
where \(\mathrm{TPR}_{A=a} = \Pr(\hat{Y} = 1 \mid Y = 1, A = a)\).
\end{definition}

\begin{definition}[(Mean) equalized odds \cite{hardt2016equality,alghamdi2022beyond}]\label{def: mean equalized odds}
Let \(X \in \mathcal{X}\), \(A \in [k]\), and \(Y \in \{0, 1\}\) be random variables representing non-sensitive features, sensitive attributes, and labels, respectively, with a joint distribution \(P_{XAY}\). Let \(\hat{Y} = \hat{h}(X, A)\) be a binary classifier, where \(\hat{h} : \mathcal{X} \times [k] \to \{0, 1\}\). The equalized odds gap  \(\Delta_{\mathrm{EOd}}(\hat{h})\) and mean equalized odds gap \(\Delta_{\mathrm{MEO}}(\hat{h})\) associated with the classifier \(\hat{h}\) and distribution \(P_{XAY}\) are defined as:
\begin{align*}
    \Delta_{\mathrm{MEO}}(\hat{h}) & :=  \max_{a, a' \in [k]} \frac{1}{2} \Bigl( \bigl| \mathrm{TPR}_{A=a} - \mathrm{TPR}_{A=a'} \bigr| + \bigl| \mathrm{FPR}_{A=a} - \mathrm{FPR}_{A=a'} \bigr| \Bigr),\\
    \Delta_{\mathrm{EOd}}(\hat{h}) & := 
    \max_{a, a' \in [k]}
    \max \Bigl\{
        \bigl| \mathrm{TPR}_{A=a} - \mathrm{TPR}_{A=a'} \bigr|,
        \bigl| \mathrm{FPR}_{A=a} - \mathrm{FPR}_{A=a'} \bigr|
    \Bigr\}.
\end{align*}
where \(\mathrm{TPR}_{A=a} = \Pr(\hat{Y} = 1 \mid Y = 1, A = a)\) and \(\mathrm{FPR}_{A=a} = \Pr(\hat{Y} = 1 \mid ~Y = 0,~A = a)\).
\end{definition}

It is worth noting that definitions of classification unfairness extend beyond the three mentioned here. Numerous definitions exist that may be useful to consider depending on the specific context of the problem \cite{chouldechova2017fair, dwork2012fairness, berk2021fairness, corbett2017algorithmic, kilbertus2017avoiding,kleinberg2016inherent,kusner2017counterfactual,sabato2024fairnessunfairnessbinarymulticlass}.

\subsection{Discrimination-Accuracy Optimality}\label{subsec: DA optimal}
\begin{definition}[Discrimination-accuracy optimal classifier \cite{kamiran2012data}] \label{def: DA optimal}
For classifiers \(h\) and \(h'\), we say  \(h\) dominates \(h'\) 
if it achieves at least the same accuracy and no higher discrimination 
(measured by an unfairness metric such as equalized opportunity or statistical parity). 
Classifier \(h\) strictly dominates \(h'\) if at least one inequality is strict. 
A classifier \(h \in \mathcal{H}\) is said to be discrimination–accuracy optimal 
(DA-optimal) in \(\mathcal{H}\) if no other classifier in \(\mathcal{H}\) strictly dominates it.

\end{definition}

\subsection{Local Differential Privacy}\label{subsec: LDP definition}
\begin{definition}[Local differential privacy \cite{warner1965randomized,evfimievski2003limiting}] A randomized algorithm \(\mathcal{M}: \mathcal{D} \to \mathcal{R}\) is said to satisfy \(\epsilon\)-local differential privacy (\(\epsilon\)-LDP), where \(\epsilon > 0\), if for any pair of input values \(x_1, x_2 \in \mathcal{D}\), and for any possible output \(y \in  \mathcal{R}\), it holds that
\begin{equation*}
   \Pr\big(\mathcal{M}(x_1) = y\big) \leq e^{\epsilon} \Pr\big(\mathcal{M}(x_2) = y\big).
\end{equation*}

\end{definition}

\subsection{Common LDP Mechanisms}\label{subsec: Common LDP mech}

\textbf{Generalized randomized response (GRR)}\cite{warner1965randomized, kairouz2014extremal}: Given a sensitive attribute \(A\) taking values in \([k]\), the generalized randomized response (GRR) mechanism perturbs the true value \(a \in [k]\) to preserve privacy. GRR outputs the true value \(a\) with probability \(\pi\), and any other possible value \(a' \in [k] \setminus \{a\}\) with a different probability \(\bar{\pi}\). Specifically, for the perturbed output \(Z\):
\begin{equation*}
    \forall z \in [k] :\hspace{0.5cm} \Pr\big(Z = z \mid A = a\big) = 
\begin{cases} 
\pi = \frac{e^\epsilon}{e^\epsilon + k - 1} & \text{if } z = a,\\
\Bar{\pi} = \frac{1}{e^\epsilon + k - 1} & \text{if } z \neq a,
\end{cases}
\end{equation*}
In binary $A$ case, the mechanism is known as randomized response (RR).




\textbf{Subset selection (SS) \cite{wang2016mutual}:}
Given a sensitive attribute \(A\) taking values in \([k]\), SS mechanism perturbs the true value \(a \in [k]\) by reporting a subset of values \(\Omega \subseteq [k]\). The mechanism aims to include the true value \(a\) in the subset \(\Omega\) with a higher probability than any other value in \([k] \setminus \{a\}\). It is proved that the optimal subset size is \(\omega=\max\Big( 1, \bigl\lfloor \tfrac{k}{e^\epsilon+1} \bigr\rceil\Big)\) \cite{ye2018optimal}. The SS mechanism proceeds as follows:
\begin{enumerate}
    \item Initialize an empty subset \(\Omega\).
    \item Add the true value \(a\) to \(\Omega\) with probability
    \[
    p = \frac{\omega e^\epsilon}{\omega e^\epsilon + k - \omega}.
    \]
    \item Fill \(\Omega\) as follows:
    \begin{itemize}
        \item If \(a \in \Omega\), sample \(\omega - 1\) additional values uniformly at random (without replacement) from \([k] \setminus \{a\}\) and add them to \(\Omega\).
        \item If \(a \notin \Omega\), sample \(\omega\) values uniformly at random (without replacement) from \([k] \setminus \{a\}\) and add them to \(\Omega\).
    \end{itemize}
\end{enumerate}
Finally, the user sends the subset \(\Omega\) to the server as the perturbed value. As we can see, when $\omega = 1$, SS is equivalent to GRR.

\section{Main Results}\label{sec: technical_results}

As highlighted in Introduction, \cite{arcolezi2023local} empirically demonstrates that applying different LDP mechanisms to the sensitive attributes of a dataset, followed by classifier training, can enhance classification fairness without significantly sacrificing model utility. This improvement is observed when compared to a non-private scenario, where the classifier is trained directly on the original non-private data in binary classification tasks. In our setting, sensitive attributes such as race and gender are privacy-sensitive. Therefore, we ensure privacy by applying LDP mechanisms to sensitive attributes, following the approach of \cite{friedberg2023privacy}, which guarantees LDP for each user's group membership.

In this work, we aim to provide a theoretical foundation for this empirical observation. In Section~\ref{subsec: GRR and Data Fairness}, we begin by analyzing GRR as an LDP approach that offers a highly effective trade-off between fairness, utility, and privacy \cite{arcolezi2023local}. Our goal is to understand how GRR impacts data fairness. In Section~\ref{subsec:optimal binary_A}, we focus on the case of a binary sensitive attribute ($A \in \{0,1\}$) and identify the optimal LDP mechanism that can minimize unfairness in the dataset. Then, in Section~\ref{subsec: optimal non-binary_A}, we extend our analysis to non-binary sensitive attributes ($A \in [k], k > 2$). Here, we generalize the optimal LDP mechanism for fairness, which leads to solving a min-max linear fractional programming \cite{jiao2022solving}. Finally, in Section \ref{subsec: data fairness leads to classification fairness}, we provide theoretical conditions that establish the connection between data unfairness and classification unfairness.

\subsection{Data Unfairness under GRR}\label{subsec: GRR and Data Fairness}
Let \(X \in \mathcal{X}\), \(A \in [k]\), and \(Y \in \{0,1\}\) be random variables representing non-sensitive features, sensitive attributes, and labels, respectively, with a joint distribution \(P_{XAY}\). Each data point in the dataset \(D\) is a triplet \((x_i, a_i, y_i)\), where \((x_i, a_i, y_i) \sim P_{XAY}\).
Now, suppose we independently perturb the sensitive attribute \(a_i\) of each data point in \(D\) using GRR to generate a new dataset \(D^{\varepsilon}_{GRR}\), keeping the same \(x_i\) and \(y_i\). The resulting dataset is \(D^{\varepsilon}_{GRR} = \{(x_i, z_i, y_i)\}_{i=1}^{n}\), where \(z_i\) is the noisy version of \(a_i\) obtained via GRR.
We define the data unfairness \(\Delta'(D^{\varepsilon}_{GRR})\) in a similar manner to Definition~\ref{def: data unfairness'} as:
\begin{equation*}
    \Delta'(D^{\varepsilon}_{GRR}) := \max_{z,z' \in [k]} \Bigl| \Pr(Y = 1 \mid Z = z) - \Pr(Y = 1 \mid Z = z' )\Bigr|,
\end{equation*}
where \(Z\) is the random variable representing the perturbed sensitive attribute after applying GRR to \(A\). These probabilities depend on both the original joint distribution \(P_{XAY}\) and the randomness introduced by GRR.

In the following lemma, we analyze the impact of applying GRR to the sensitive attributes of a dataset on data unfairness.

\begin{lemma}\label{lemma: RR makes data fairer}
     For binary \(Y\) and non-binary \(A\), applying GRR to the sensitive attributes \(A\) of a dataset results in $\Delta'(D^{\varepsilon}_{GRR}) \leq \Delta'(D)$.
\end{lemma}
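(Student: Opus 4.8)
The plan is to exploit the fact that GRR perturbs each $a_i$ using only its own value, independently of $(x_i, y_i)$. This means the perturbed attribute $Z$ satisfies the Markov relation $Y - A - Z$, i.e.\ $Z$ is conditionally independent of $Y$ given $A$. I would first record this conditional independence explicitly, since it is what drives the whole argument and must be justified from the description of GRR (the noise applied to $a_i$ depends on neither $x_i$ nor $y_i$).

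Next I would compute $\Pr(Y=1 \mid Z=z)$ by conditioning on $A$. Writing $q_a := \Pr(Y=1 \mid A=a)$, the law of total probability together with $Z \perp Y \mid A$ gives
\begin{equation*}
\Pr(Y=1\mid Z=z) = \sum_{a\in[k]} \Pr(A=a\mid Z=z)\, q_a .
\end{equation*}
The weights $w^{(z)}_a := \Pr(A=a\mid Z=z)$ are nonnegative and sum to one, and they are well defined because $\Pr(Z=z) \ge \bar\pi > 0$ for every $z$ under GRR. Hence each $\Pr(Y=1\mid Z=z)$ is a convex combination of the values $\{q_a\}_{a\in[k]}$.

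The key step is then immediate: any convex combination of $\{q_a\}$ lies in $[\min_a q_a, \max_a q_a]$, so for all $z,z'$ both $\Pr(Y=1\mid Z=z)$ and $\Pr(Y=1\mid Z=z')$ sit inside this interval, giving
\begin{equation*}
\bigl|\Pr(Y=1\mid Z=z) - \Pr(Y=1\mid Z=z')\bigr| \le \max_a q_a - \min_a q_a = \max_{a,a'}\bigl|q_a - q_{a'}\bigr| = \Delta'(D).
\end{equation*}
Taking the maximum over $z,z'$ on the left-hand side yields $\Delta'(D^{\varepsilon}_{GRR}) \le \Delta'(D)$.

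I do not expect a genuine obstacle here; the only subtlety is the conditional-independence claim, which is the hinge of the whole proof. Notably, this argument never uses the specific GRR probabilities beyond $\bar\pi > 0$, so it is really a data-processing statement that would hold for any attribute-only randomizer. An alternative, more computational route would substitute the explicit GRR probabilities to obtain $\Pr(Y=1\mid Z=z) = (\bar\pi\,\Pr(Y=1) + (\pi-\bar\pi)p_z q_z)/(\bar\pi + (\pi-\bar\pi)p_z)$ and recognize it as a weighted average of $\Pr(Y=1)$ and $q_z$, but the convexity argument is cleaner and avoids grinding through the difference of two fractions.
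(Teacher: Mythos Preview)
Your proof is correct and follows essentially the same approach as the paper: both arguments establish that $\Pr(Y=1\mid Z=z)$ is a convex combination of $\{p_{1|a}\}_{a\in[k]}$ (the paper via the explicit GRR-weighted ratio, you via the posterior weights $\Pr(A=a\mid Z=z)$), and then conclude that it lies in $[\min_a p_{1|a},\max_a p_{1|a}]$. Your presentation is slightly cleaner in that it never substitutes the specific GRR probabilities and correctly notes the result holds for any attribute-only randomizer, but the underlying idea is identical.
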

This implies that applying GRR to the sensitive attributes of a dataset can help reduce data unfairness. Motivated by this, we investigate the problem of finding an optimal LDP mechanism that minimizes data unfairness. 
For clarity, we begin with the binary sensitive attribute case. In Section~\ref{subsec:optimal binary_A}, we address this by formulating the optimal LDP mechanism for enhancing data fairness in the case of binary sensitive attributes.

\subsection{Optimal LDP Mechanism: Binary Sensitive Attributes} \label{subsec:optimal binary_A}

Consider a dataset \(D = \{(x_i, a_i, y_i)\}_{i=1}^{n}\). 
Each sensitive attribute \(a_i\) is independently perturbed by an \(\varepsilon\)-LDP mechanism \(M\), producing a new dataset 
\(D_{M} = \{(x_i, z_i, y_i)\}_{i=1}^{n}\), 
where \(z_i\) is the noisy version of \(a_i\) generated by the randomized mapping \(M\) characterized as follows:
\begin{align}\label{eqn: general LDP mechanism}
\Pr(Z = z \mid A = 0) &= 
\begin{cases}
p, & z = 0,\\
1-p, & z = 1,
\end{cases} \nonumber \\[1mm]
\Pr(Z = z \mid A = 1) &=
\begin{cases}
q, & z = 1,\\
1-q, & z = 0.
\end{cases}
\end{align}
Here, \(Z\) denotes the output of the mechanism \(M\) applied to \(A\). 
The parameters \(p\) and \(q\) represent the probabilities of correctly reporting 
\(A=0\) and \(A=1\), respectively. 
Note that RR is obtained as a special case with 
\(p = q = \tfrac{e^{\varepsilon}}{e^{\varepsilon} + 1}\).

For a given privacy budget \(\varepsilon\), our goal is to find the optimal 
\(\varepsilon\)-LDP mechanism \(M\) that minimizes the unfairness 
of the perturbed data. 
Specifically, we seek to minimize the ratio $
\frac{\Delta'(D_{M})}{\Delta'(D)},$
where \(\Delta'(D)\) depends only on the original data distribution 
\(P_{XAY}\) and is thus constant.

We define the privacy level of a mechanism \( M \) as 
\[
\varepsilon^\star(M) \coloneqq \inf \{ \eps' \ge 0 : M \text{ is } \eps'\text{-LDP} \}.
\]
Intuitively, \( \varepsilon^\star(M) \)  denotes the smallest privacy parameter $\eps'$
for which $M$ satisfies LDP.
For a given $\eps$, if we consider the objective function $
\min\limits_{\eps-\text{LDP} M} 
\frac{\Delta'(D_{M})}{\Delta'(D)},$
the optimal fairness mechanism would be a fully random mechanism with 
\( p = q = \tfrac{1}{2} \), which trivially satisfies 
\( \varepsilon \)-LDP for any \( \varepsilon \ge 0 \). 
However, this mechanism offers no utility. 
To achieve a more meaningful trade-off between privacy and fairness, 
we instead consider the refined objective function: 
\begin{equation}\label{eqn: objective function binary optimal mechansim}
  \min\limits_{M : \, \varepsilon^\star(M) = \varepsilon} 
  \frac{\Delta'(D_{M})}{\Delta'(D)}.
\end{equation}
This formulation considers mechanisms with privacy level~\( \varepsilon \), 
ensuring meaningful (non-trivial) utility. 
For a given \( \varepsilon \), we seek the optimal mechanism \( M \)—or equivalently, 
the optimal parameters \( p^\star \) and \( q^\star \)—that minimize the objective 
in \eqref{eqn: objective function binary optimal mechansim}. 
The following theorem characterizes this optimal \( \varepsilon \)-LDP mechanism 
for the binary sensitive attribute case.

\begin{theorem}\label{thm: binary optimal mechanism}
Consider the case of binary \( Y \) and binary~\( A \), where \( p_{1|0} \leq p_{1|1} \). Let \( (p^*, q^*) \) represent the optimal parameters that minimize the objective function defined in (\ref{eqn: objective function binary optimal mechansim}). The optimal LDP mechanism is determined as follows:
\begin{align*}
    \text{If} \hspace{0.5cm} p_0 < p_1, \quad (p^*, q^*) = \bigl(1 - \frac{e^{-\varepsilon}}{2}, \frac{1}{2} \bigr),\\
    \text{If} \hspace{0.5cm} p_1 < p_0, \quad (p^*, q^*) = \bigl( \frac{1}{2}, 1 - \frac{e^{-\varepsilon}}{2} \bigr),
\end{align*}
where \( p \) and \( q \) are the parameters of the general LDP mechanism as defined in (\ref{eqn: general LDP mechanism}). In the tie case \(p_0 = p_1\), both solutions yield the same optimal value.

\end{theorem}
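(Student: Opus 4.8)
The plan is to turn the objective in \eqref{eqn: objective function binary optimal mechansim} into an explicit rational function of $(p,q)$ and then minimize it by a one--dimensional monotonicity argument along the boundary of the privacy constraint. Because $M$ perturbs $A$ alone, so that $Z$ is conditionally independent of $(X,Y)$ given $A$, the law of total probability gives $\Pr(Y=1,Z=0)=p_0 p_{1|0}\,p + p_1 p_{1|1}(1-q)$ and $\Pr(Z=0)=p_0 p + p_1(1-q)$, with symmetric expressions for $Z=1$. Writing $D_0=\Pr(Z=0)$, $D_1=\Pr(Z=1)$ and forming $\Pr(Y=1\mid Z=1)-\Pr(Y=1\mid Z=0)$ over the common denominator $D_0 D_1$, most terms cancel pairwise and the numerator factors cleanly as $(p+q-1)\,p_0 p_1\,(p_{1|1}-p_{1|0})$. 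Since $p_{1|0}\le p_{1|1}$ gives $\Delta'(D)=p_{1|1}-p_{1|0}$, and since $D_0+D_1=1$, this produces
\begin{equation*}
\frac{\Delta'(D_M)}{\Delta'(D)} = \frac{p_0 p_1\,|p+q-1|}{D_0\,(1-D_0)}, \qquad D_0 = p_0 p + p_1(1-q),
\end{equation*}
which is the computational core of the proof.

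Next I would pin down the feasible set. Working in the regime that guarantees non-trivial utility, $p,q\ge\tfrac12$ (each value reported correctly at least as often as not), forces $p+q\ge1$, so $|p+q-1|=p+q-1$, and makes $\tfrac{p}{1-q}$ and $\tfrac{q}{1-p}$ dominate their reciprocals, whence $\varepsilon^\star(M)=\log\max\{\tfrac{p}{1-q},\tfrac{q}{1-p}\}$. The constraint $\varepsilon^\star(M)=\varepsilon$ then splits into two faces: (A) $q=e^{\varepsilon}(1-p)$ with $\tfrac{p}{1-q}\le e^{\varepsilon}$, and (B) $p=e^{\varepsilon}(1-q)$ with $\tfrac{q}{1-p}\le e^{\varepsilon}$. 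On face (A), substituting $w=1-p$ and $c=e^{\varepsilon}-1$ gives $p+q-1=cw$ and $D_0=1-w(1+p_1 c)$, so the factor $w$ cancels and the objective collapses to
\begin{equation*}
\frac{\Delta'(D_M)}{\Delta'(D)} = \frac{p_0 p_1\,c}{(1+p_1 c)\bigl(1-w(1+p_1 c)\bigr)},
\end{equation*}
which is strictly increasing in $w$. Its minimum is therefore attained at the smallest admissible $w$; the utility constraint $q\ge\tfrac12$ yields $w\ge\tfrac{e^{-\varepsilon}}{2}$ (this lower bound, not the privacy side-constraint, is what binds), giving the face-(A) minimizer $(p,q)=(1-\tfrac{e^{-\varepsilon}}{2},\tfrac12)$. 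The identical computation on face (B), with $p_0$ replacing $p_1$, yields $(p,q)=(\tfrac12,1-\tfrac{e^{-\varepsilon}}{2})$.

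Finally I would compare the two faces. The candidate values are $p_0 p_1 c/G(1+p_1 c)$ and $p_0 p_1 c/G(1+p_0 c)$ with $G(s)=s\bigl(1-\tfrac{e^{-\varepsilon}}{2}s\bigr)$; since $G$ is a downward parabola peaking at $s=e^{\varepsilon}$ and $1+p_i c\le e^{\varepsilon}$, $G$ is increasing on the relevant range, so the smaller objective corresponds to the larger of $p_0,p_1$. This selects face (A) when $p_0<p_1$ and face (B) when $p_1<p_0$, with equality at $p_0=p_1$, matching the statement.

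I expect two genuine obstacles. The first is exposing the cancellation that produces the factor $(p+q-1)\,p_0 p_1\,(p_{1|1}-p_{1|0})$ together with the identity $D_0+D_1=1$, since these are exactly what make the reduced objective a single monotone function on each face. The second, and more delicate, is correctly delimiting the feasible region: without the non-trivial-utility constraint $p,q\ge\tfrac12$, the objective can be pushed strictly below the claimed optimum by sending one reporting probability toward certainty and the other toward chance while keeping $\varepsilon^\star(M)=\varepsilon$, so making the utility requirement explicit is essential for the minimum to be attained at the stated point.
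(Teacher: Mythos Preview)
Your proposal is correct and follows essentially the same approach as the paper's proof: compute the closed-form objective, restrict to the non-trivial-utility region $p,q\ge\tfrac12$ and to the two boundary faces of the constraint $\varepsilon^\star(M)=\varepsilon$, establish monotonicity along each face, and compare the two face optima. Your substitutions $w=1-p$, $c=e^{\varepsilon}-1$, the identity $pq-(1-p)(1-q)=p+q-1$, and the final comparison via $G(s)=s\bigl(1-\tfrac{e^{-\varepsilon}}{2}s\bigr)$ make the algebra noticeably cleaner than the paper's direct manipulations, but the structure and key ideas are identical.
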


This theorem identifies the optimal mechanism for applying LDP as a pre-processing strategy in the binary sensitive attribute case. 
However, sensitive attributes are not always binary, leading to the natural inquiry of how to determine the optimal pre-processing mechanism under LDP constraints for non-binary sensitive attributes. In the next section, we explore this issue.

\subsection{Optimal LDP Mechanism:  Non-Binary Sensitive Attributes} \label{subsec: optimal non-binary_A}
As shown in Theorem~\ref{thm: binary optimal mechanism}, 
the optimal LDP pre-processing mechanism for a binary sensitive attribute 
admits a closed-form solution, where \((p^\star, q^\star)\) depend on the 
data distribution and privacy parameter \(\varepsilon\). 
We next extend this result to non-binary sensitive attributes.

We want to find the optimal LDP mechanism for the binary $Y$ and non-binary $A$ case. Let $A \in [k]$, and let $\mathbf{Q}$ be a $k \times k$ matrix containing the parameters of the LDP mechanism. The randomized mechanism changes the sensitive attribute of the original data from $A = a$ to $Z = z$ using the parameters of $\mathbf{Q}$, defined as:

\begin{equation*}
    \Pr(Z = j \mid A = i) = q_{ij} \hspace{2cm} \forall i,j \in [k],
\end{equation*}

where \( q_{ij} \) represents the element in the \( i \)-th row and \( j \)-th column of the matrix \( \mathbf{Q} \). Similar to Section \ref{subsec:optimal binary_A}, the perturbed dataset is \(D_{M} = \{(x_i, z_i, y_i)\}_{i=1}^{n}\). The idea of the perturbation matrix \( \mathbf{Q} \) follows \cite{celis2021fair}, where this matrix is used to generate noisy sensitive attributes. However, in their setting, the perturbation mechanism is not an LDP mechanism. 
 The matrix $\mathbf{Q}$ should satisfy the following constraints:

\begin{enumerate}
    \item \textbf{LDP Constraint:} The matrix $\mathbf{Q}$ should satisfy $\epsilon$-LDP, i.e.,
    \begin{equation}\label{Q_constraint_LDP}
        q_{ij} - e^\eps q_{i'j} \leq 0 \hspace{2cm} \forall i,i',j \in [k] \quad \text{with } i \neq i'.
    \end{equation}

    \item \textbf{Row-Stochastic Constraint:} The matrix $\mathbf{Q}$ should satisfy 
\begin{align}\label{Q_constraint_row-stochastic}
    \sum_{j=1}^k q_{ij} = 1 \;\; \forall i \in [k], \qquad q_{ij} \ge 0 \;\; \forall i,j \in [k].
\end{align}

    \item \textbf{Truthfulness Constraint:} The probability of truly representing the sensitive attribute should be larger than or equal to the probability of misrepresenting it, i.e.,
    \begin{align}\label{Q_constraints_truth_and_lie_1} & q_{ii} \geq q_{ij} \hspace{2cm} \forall i,j \in [k], \\ & q_{jj} \geq q_{ij} \hspace{2cm} \forall i,j \in [k]. \label{Q_constraints_truth_and_lie_2} \end{align}

    \item \textbf{Utility Constraint:} The mechanism should satisfy a utility metric, where the probability of error (i.e., total probability of $Z \neq A$) should be smaller than some predefined constant $\zeta$. This constraint is formulated as:
    \begin{align} 
       & \Pr(A = Z) \geq 1-\zeta. \nonumber
    \end{align}

    We know: 
    \begin{align*}
         \Pr(A = Z) = \sum_{i=1}^k \Pr(Z = i, A = i) = \sum_{i=1}^k \Pr(Z = i \mid A = i)\Pr(A = i)  = \sum_{i=1}^k q_{ii}p_i.
    \end{align*}
    Therefore, the utility constraint becomes:
    \begin{equation}\label{Q_constraint_utility}
        \sum_{i=1}^k q_{ii}p_i \geq 1 - \zeta.
    \end{equation}
    
\end{enumerate}
This utility constraint aims to regulate the trade-off between fairness and utility. Without the utility constraint, if we only consider the data fairness objective function, we may converge to a scenario where perfect data fairness is achieved at the cost of poor utility. This constraint prevents such a scenario. The parameter $\zeta$ allows us to control the utility of the LDP mechanism. 

Given (\ref{Q_constraints_truth_and_lie_2}), (\ref{Q_constraint_LDP}) can be reduced to:
\begin{equation}\label{Q_constraint_LDP_reduced}
        q_{jj} - e^\eps q_{ij} \leq 0 \hspace{2cm} \forall i,j \in [k].
    \end{equation}

Altogether, the optimization problem is as follows: 
\allowdisplaybreaks[0]
\begin{align}\label{eqn: original optimization_problem}
    & \hspace{-0.2cm}\min\limits_{\mathbf{Q}} \hspace{0.2cm} \Delta(D_M) \\ 
    & \text{s.t.} \hspace{1cm} q_{jj} - e^\eps q_{ij} \leq 0 \hspace{5.05cm} \forall i,j \in [k] \nonumber \\
    & \hspace{1.4cm} \sum\limits_{j=1}^k q_{ij} = 1 \hspace{5.65cm} \forall i \in [k] \nonumber \\
    & \hspace{1.5cm} q_{ij} \geq 0, \hspace{0.5cm} q_{ii} \geq q_{ij} \hspace{0.5cm} q_{jj} \geq q_{ij,} \hspace{2.42cm} \forall i,j \in [k] \nonumber \\
    & \hspace{1.4cm} \sum_{i=1}^k q_{ii}p_i \geq 1-\zeta \nonumber.
\end{align}
\allowdisplaybreaks


Now, we need to write $\Delta(D_M)$ in terms of the parameters of the optimization problem. i.e., entries  of matrix $\mathbf{Q}$. It can be shown that:
\begin{align*}
    \Delta(D_{M}) := \max_{a \in [k]} \left| \frac{\Pr(Y = 1 \mid Z = a)}{\Pr(Y = 1)} - 1 \right| = \max\limits_{a \in [k]} \hspace{0.1cm} \Biggl| \dfrac{\sum\limits_{j=1}^k p_{1|j}p_jq_{ja}}{\sum\limits_{j=1}^k \Pr(Y=1) p_j q_{ja}} -   1 \Biggr|.
\end{align*}
To transform the equality constraint into an inequality constraint, we assume that 
$q_{ii} = 1 - \sum\limits_{\substack{j=1 \\ j \neq i}}^{k} q_{ij} $ for any $ i \in [k]$. With this assumption, optimization problem (\ref{eqn: original optimization_problem}) can be reformulated as:

\allowdisplaybreaks[0]
\begin{align} \label{eqn: minmax opt problem}
    & \min\limits_{\mathbf{Q}} \hspace{0.2cm} \max\limits_{a \in [k]} \hspace{0.1cm} \Biggl| \frac{\sum\limits_{\substack{j = 1 \\ j \neq a}}^k p_{1|j}p_jq_{ja} + p_{1|a}p_a q_{aa} - \sum\limits_{\substack{j=1 \\ j \neq a}}^k \Pr(Y=1) p_j q_{ja} - \Pr(Y=1) p_a q_{aa}}{\sum\limits_{\substack{j=1 \\ j \neq a}}^k \Pr(Y=1) p_j q_{ja} + \Pr(Y=1) p_a q_{aa}}  \Biggr| \\[-5pt] 
    & \text{s.t.} \hspace{1.2cm} \Bigl(1 - \sum\limits_{\substack{a=1 \\ a \neq j}}^{k} q_{ja}\Bigr) - e^\eps q_{ij} \leq 0 \hspace{5.96cm} \forall i,j \in [k], \quad i \neq j \nonumber \\[-5pt]
    & \hspace{1.8cm} \sum\limits_{\substack{a=1 \\ a \neq i}}^{k} q_{ia} \leq 1 \hspace{7.95cm} \forall i \in [k] \nonumber \\[-5pt]
    & \hspace{1.8cm} q_{ij} \geq 0, \hspace{0.5cm} 1 - \sum\limits_{\substack{a=1 \\ a \neq i}}^{k} q_{ia} \geq q_{ij}, \hspace{0.5cm}   1 - \sum\limits_{\substack{a=1 \\ a \neq j}}^{k} q_{ja} \geq q_{ij} \hspace{2.32cm} \forall i,j \in [k], \quad i \neq j \nonumber \\[-5pt]
    & \hspace{1.8cm} \sum\limits_{i=1}^k \Bigl(1 - \sum\limits_{\substack{a=1 \\ a \neq i}}^{k} q_{ia} \Bigr) p_i \geq 1 - \zeta \nonumber.
\end{align}
\allowdisplaybreaks
Note that the parameters of this optimization problem are the non-diagonal entries of the matrix \( \mathbf{Q} \), rather than all of its entries. The optimization problem~\eqref{eqn: minmax opt problem} can be cast as a min–max linear fractional program~\cite{jiao2022solving}, with an objective given by a ratio of two linear functions under linear inequality constraints. For fixed values of \( \epsilon \) and \( \zeta \), this problem can be solved numerically using the branch-and-bound method, as outlined in \cite{jiao2022solving}. Thus, for any given data distribution, we can numerically solve the problem to determine the optimal pre-processing mechanism with the desired \( \epsilon \) parameter for LDP and \( \zeta \) error parameter. We choose Definition \ref{def: data unfairness} over Definition \ref{def: data unfairness'} because the optimization problem (\ref{eqn: original optimization_problem}) can be expressed as a min-max linear fractional program when using the data unfairness metric \( \Delta \).

\subsection{Data Fairness Leads to Classification Fairness} \label{subsec: data fairness leads to classification fairness} 

In the previous sections, we derived optimal LDP mechanisms that minimize data unfairness. 
Our broader goal, however, is to ensure that training on the transformed data also leads to reduced classification unfairness.
Because classification unfairness depends on the specific classifier, 
it is difficult to make a universal claim. 
To establish a formal link, we identify a sufficient condition under which 
reducing data unfairness guarantees a reduction in classification unfairness. 
This connection, expressed through the concept of DA-optimal classifiers, 
is formalized in the following theorem.

\begin{theorem}\label{thm: data fairness leads to classification fairness}
Let \(P_{XAY}\) and \(Q_{XAY}\) be two joint distributions over 
\(\mathcal{X} \times \{0,1\} \times \{0,1\}\), and 
\(\mathcal{H}^*\) denote the set of classifiers satisfying 
\(\Pr(\hat{Y}=1)=\Pr(Y=1)\). 
Assume that the sensitive attribute has the same marginal under both distributions, 
i.e., \(\Pr_{(X,A,Y)\sim P_{XAY}}(A = a) = \Pr_{(X,A,Y)\sim Q_{XAY}}(A = a)\) for all \(a\in\{0,1\}\), 
and \(\Delta'(P_{XAY}) \le \Delta'(Q_{XAY})\). 
If \(h_P\) and \(h_Q\) are DA-optimal classifiers in \(\mathcal{H}^*\) 
with equal accuracy, trained on data from \(P_{XAY}\) and \(Q_{XAY}\) respectively, 
then
\[
\Delta_{\mathrm{SP}}(h_P) \le \Delta_{\mathrm{SP}}(h_Q).
\]

\end{theorem}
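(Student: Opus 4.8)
The plan is to reduce everything to the binary sensitive attribute (already the setting here) and to reparametrize each classifier by the single scalar \(\delta := \Delta_{\mathrm{SP}}(\hat h)\), using the constraint defining \(\mathcal{H}^*\) to pin down the group-wise acceptance rates. Writing \(p_a := \Pr(A=a)\), \(p_{1|a} := \Pr(Y=1\mid A=a)\), \(\pi := \Pr(Y=1)\), and \(r_a := \Pr(\hat Y = 1\mid A=a)\), I would order the groups so that \(p_{1|0}\le p_{1|1}\), so that \(\Delta' = p_{1|1}-p_{1|0}\). The final comparison then reduces to showing that, among DA-optimal classifiers in \(\mathcal{H}^*\) of a fixed accuracy \(\alpha\), the induced gap \(\delta\) is an increasing function of \(\Delta'\) with the same functional form for \(P\) and \(Q\) (since the two distributions share \(p_0,p_1\)); monotonicity in \(\Delta'\) then yields \(\delta_P \le \delta_Q\) directly.

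First I would carry out the \(\mathcal{H}^*\) reduction. The membership constraint \(\Pr(\hat Y=1)=\pi\) reads \(p_0 r_0 + p_1 r_1 = \pi\), and a short argument shows a DA-optimal classifier must accept the higher-base-rate group at least as often (the opposite sign is strictly dominated), so \(r_1 \ge r_0\) and \(\delta = r_1 - r_0\). Solving the two linear relations gives \(r_0 = \pi - p_1\delta\) and \(r_1 = \pi + p_0\delta\), whence the key identities
\[
r_0 - p_{1|0} = p_1(\Delta' - \delta), \qquad r_1 - p_{1|1} = p_0(\delta - \Delta').
\]
Thus both per-group deviations \(|r_a - p_{1|a}|\) are proportional to the single quantity \(|\Delta' - \delta|\). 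This step, which makes the whole comparison tractable, is entirely general and uses no assumption on the posterior.

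Next I would characterize the DA-optimal classifier of accuracy \(\alpha\). Maximizing accuracy subject to prescribed acceptance rates \((r_0,r_1)\) is achieved by group-wise thresholding of the posterior \(\eta(x,a) = \Pr(Y=1\mid X=x, A=a)\), so a DA-optimal classifier in \(\mathcal{H}^*\) is determined—as far as its accuracy and \(\Delta_{\mathrm{SP}}\) are concerned—by \(\delta\) alone. In the regime where the optimal per-group error equals \(|r_a - p_{1|a}|\), combining with the identities above gives the clean closed form \(\alpha = 1 - 2 p_0 p_1 |\Delta' - \delta|\). Since for \(\delta > \Delta'\) accuracy drops while discrimination grows (a strictly dominated direction), DA-optimal classifiers live on the branch \(\delta \in [0,\Delta']\), on which \(\alpha = 1 - 2p_0 p_1(\Delta' - \delta)\) is strictly increasing in \(\delta\). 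Inverting, \(\delta = \Delta' - (1-\alpha)/(2p_0 p_1)\); applying this to \(P\) and \(Q\) at the common accuracy \(\alpha\) and common \(p_0,p_1\) yields \(\delta_P - \delta_Q = \Delta'(P) - \Delta'(Q) \le 0\), which is the claim.

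The main obstacle is the step asserting that the optimal per-group error equals \(|r_a - p_{1|a}|\), since in general the accuracy attainable at acceptance rate \(r_a\) depends on the entire conditional law of \(\eta\) given \(A=a\), not merely on the base rate \(p_{1|a}\); this exact identity holds, for instance, for (near-)deterministic labels or when \(P\) and \(Q\) share the same within-group posterior shape. Absent such structure, I would recast the goal in Pareto-frontier language: it suffices to show that the accuracy–discrimination frontier of \(P\) dominates that of \(Q\), i.e.\ that at every fixed \(\delta\) the less biased distribution \(P\) admits weakly higher optimal accuracy, after which the equal-accuracy hypothesis forces \(\delta_P \le \delta_Q\) by the monotonicity of each frontier. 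Establishing this domination—controlling how the full posterior, and not just \(\Delta'\), enters the attainable accuracy—is the crux, and is where I expect the bulk of the technical work (or an explicit regularity assumption on the posteriors) to be required.
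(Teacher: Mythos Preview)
Your approach is essentially the paper's: both rest on the affine identity
\[
\delta \;=\; \Delta' - \frac{1-\alpha}{2p_0p_1}
\]
for DA-optimal classifiers in $\mathcal{H}^*$, after which subtracting the two instances at common $(\alpha,p_0,p_1)$ gives $\delta_P-\delta_Q=\Delta'(P)-\Delta'(Q)\le 0$. The difference is that the paper does not derive this identity; it simply quotes it as Theorem~1 of Kamiran--Calders~\cite{kamiran2012data}, stated there for finite datasets with unrestricted relabelings, precisely the setting where a perfect classifier exists and your per-group error formula $|r_a-p_{1|a}|$ holds exactly.

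Your ``main obstacle'' is therefore well spotted but is exactly what that citation absorbs: the regularity you worry about (deterministic labels / arbitrary labelings so that thresholding achieves any prescribed $(r_0,r_1)$ with error $|r_a-p_{1|a}|$) is the standing assumption of the cited result, and the paper's proof inherits it without comment. The paper makes no attempt at the Pareto-frontier domination argument you sketch as a fallback, and you need not either---the intended resolution is to invoke the Kamiran--Calders characterization as a black box and subtract, which your parametrization has effectively re-proved.
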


This theorem links data and classification unfairness through the DA-optimality condition. Under this condition, reducing data unfairness yields less discriminatory classifiers. In the definition of DA-optimality, we adopt the statistical parity gap~\(\Delta_{\mathrm{SP}}\) as the discrimination metric and use~\(\Delta_{\mathrm{SP}}(\hat{h})\) and~\(\Delta'(D)\) as the measures of classification and data unfairness, respectively, since both capture dependencies between labels and sensitive attributes in the same way.

Having theoretically examined the optimal LDP mechanisms for minimizing data unfairness, the next section will present experiments evaluating the performance of these mechanisms in classification tasks. These experiments will address both binary and non-binary sensitive attribute optimization problems.

\section{Experiments}\label{sec: experiments}
The optimal mechanisms introduced in Section~\ref{sec: technical_results} are designed to minimize data unfairness while satisfying utility and LDP constraints. We refer to both the binary and non-binary mechanisms as OPT. In this section, we empirically evaluate the effectiveness of OPT across different classification settings. Section~\ref{subsec: binary-nonbinary experiments} compares OPT with existing LDP mechanisms for both binary and multi-valued sensitive attributes. Section~\ref{subsec: mozannar exp} evaluates OPT against RR under the fairness framework of \cite{mozannar2020fair}. Finally, Section~\ref{subsec: fair projection experiments} compares OPT with Fair Projection~\cite{alghamdi2022beyond} and FairBalance~\cite{yu2024fairbalance}, two state-of-the-art post- and pre-processing fairness methods~\cite{wang2024aleatoric,soremekun2022software}.%
\footnote{
Experimental code is publicly available at \url{https://github.com/hradghoukasian/ldp_fairness}.}

\subsection{Comparison with Existing LDP Mechanisms}\label{subsec: binary-nonbinary experiments}

Following~\cite{arcolezi2025group}, we evaluate the proposed mechanism on the Adult~\cite{UCI} and Law School Admissions Council (LSAC)~\cite{wightman1998lsac} datasets, where the label \(Y\) indicates income above~\$26k and bar-exam pass, respectively. Depending on the setting (binary or multi-valued), the sensitive attribute is chosen among \texttt{race}, \texttt{gender}, \texttt{race--gender}, or \texttt{family income}.

We compare OPT with GRR (or RR in the binary case) and SS, which are known to provide strong privacy–utility–fairness trade-offs among existing LDP mechanisms~\cite{arcolezi2025group}.
 Each mechanism is evaluated across multiple privacy levels~\(\varepsilon\) and against a non-private baseline. The classifier~\(\hat{h}\) is a gradient-boosted tree (LightGBM)~\cite{ke2017lightgbm}, trained using an 80/20 train–test split and averaged over 20 random seeds. Predictions are made on the original (non-perturbed) test data. Utility is measured by accuracy, and fairness by the equalized-opportunity gap~\(\Delta_{\mathrm{EO}}(\hat{h})\) and statistical-parity gap~\(\Delta_{\mathrm{SP}}(\hat{h})\), with 95\% confidence intervals shown in the plots.

For binary attributes, we apply OPT, RR, and SS to perturbed training data, while the non-private baseline is trained on the original data. Here, OPT corresponds to the closed-form mechanism in Theorem~\ref{thm: binary optimal mechanism}, using \texttt{gender} as the sensitive attribute. 
Across privacy levels, OPT maintains accuracy comparable to RR, SS, and the non-private baseline, while consistently reducing unfairness (Figures~\ref{fig:adult2}--\ref{fig:lsac2}). As~\(\varepsilon\) increases, OPT yields larger fairness gains: on LSAC, both~\(\Delta_{\mathrm{SP}}(\hat{h})\) and~\(\Delta_{\mathrm{EO}}(\hat{h})\) are nearly halved; on Adult, fairness improves by at least~2\%. Notably, OPT does not converge to the non-private mechanism as~\(\varepsilon \to \infty\), since one sensitive value remains perturbed with probability~\(1/2\). Also, when~\(A\) is binary, SS and RR coincide.

\begin{figure}[htpb]
  \centering
  \includegraphics[width=0.7\columnwidth]{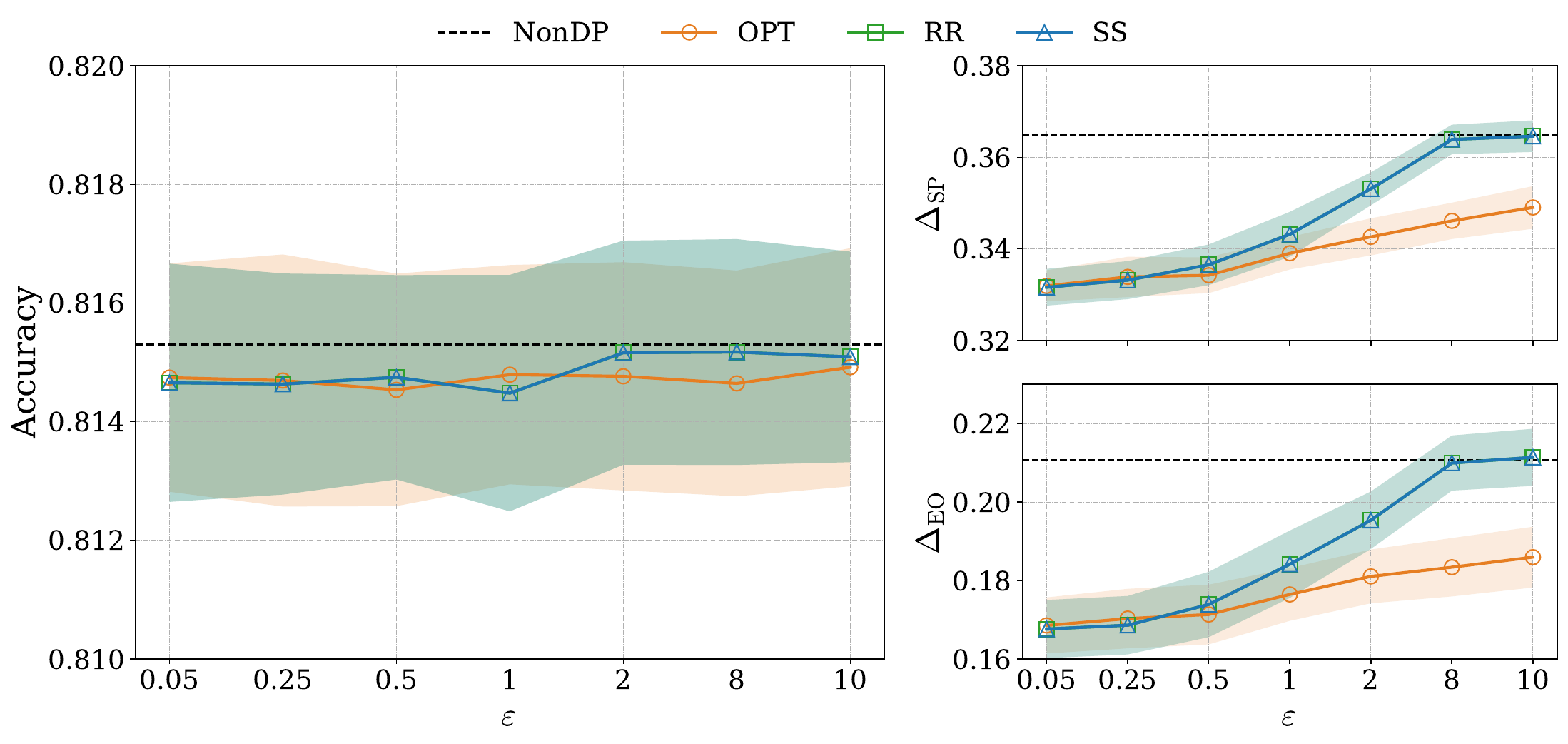}
  \caption{Adult dataset with binary sensitive attribute \texttt{gender}. 
  Left: accuracy; right: fairness metrics (statistical parity gap and equalized opportunity gap).}
  \label{fig:adult2}
\end{figure}

\begin{figure}[htpb]
  \centering
  \includegraphics[width=0.7\columnwidth]{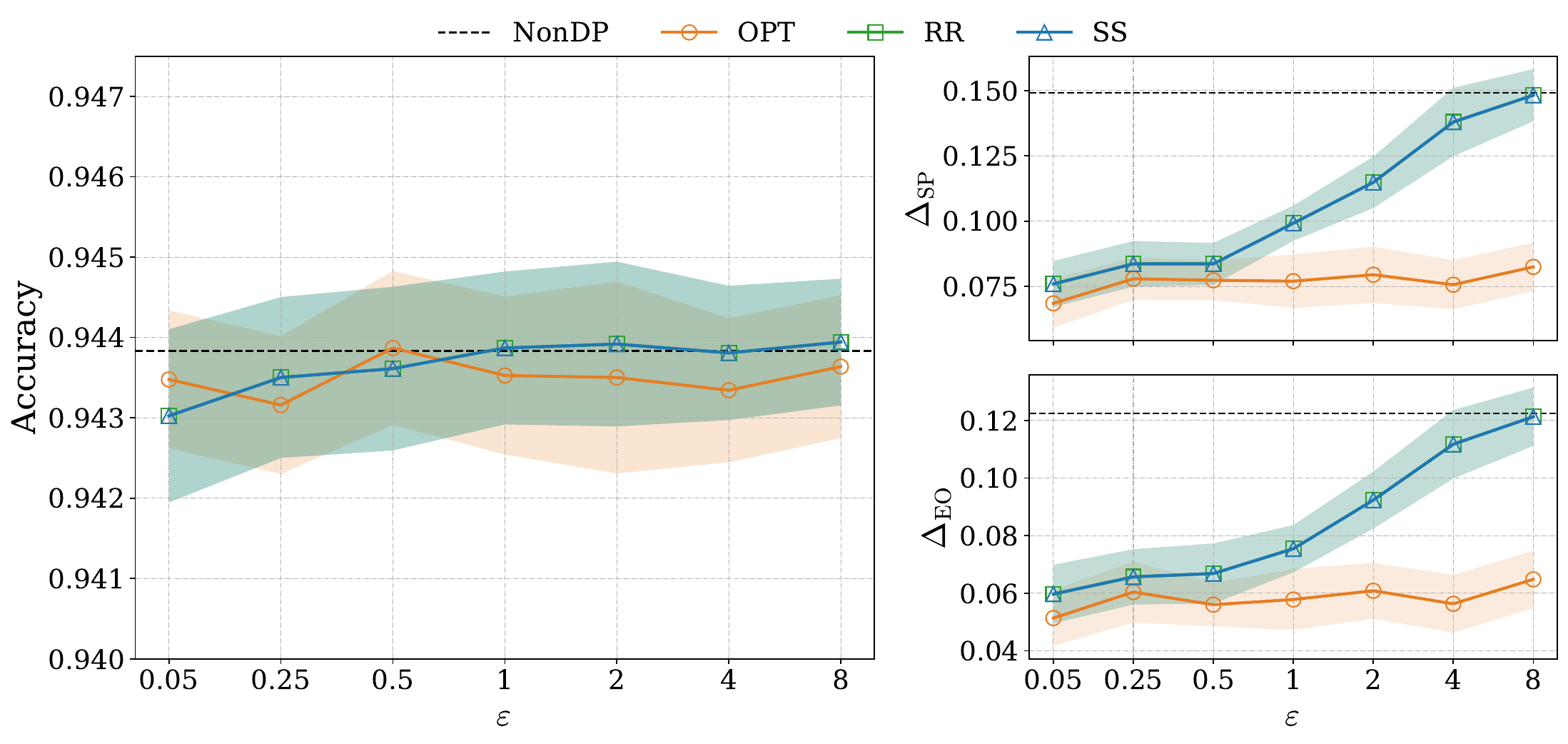}
  \caption{LSAC dataset with binary sensitive attribute \texttt{gender}. 
  Left: accuracy; right: fairness metrics (statistical parity gap and equalized opportunity gap).}
  \label{fig:lsac2}
\end{figure}


We next evaluate non-binary attributes: \texttt{race} (\(k=5\)) and \texttt{race--gender} (\(k=10\)) for Adult, and \texttt{family income} (\(k=5\)) for LSAC. 
The optimal mechanism is computed by solving the min--max linear fractional program in~\eqref{eqn: original optimization_problem} following~\cite{jiao2022solving}, using the smallest feasible~\(\zeta\) for each~\(\varepsilon\). 
Results are compared against GRR, SS, and the non-private baseline.

s

\begin{figure}[htpb]
  \centering
  \includegraphics[width=0.7\columnwidth]{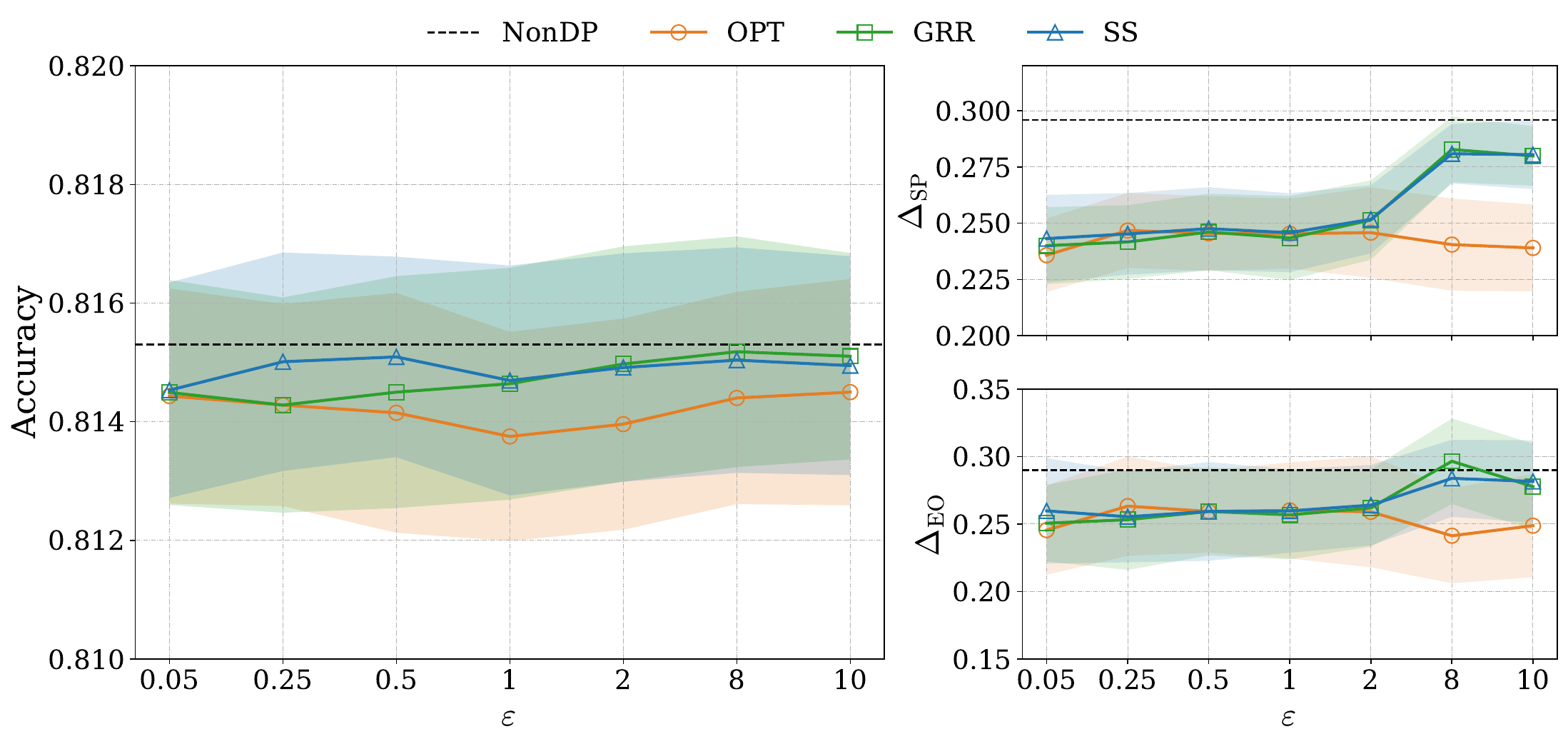}
  \caption{Adult dataset with 5-level sensitive attribute \texttt{race}. 
  Left: accuracy; right: fairness metrics (statistical parity gap and equalized opportunity gap).}
  \label{fig:adult5}
\end{figure}

\begin{figure}[htpb]
  \centering
  \includegraphics[width=0.7\columnwidth]{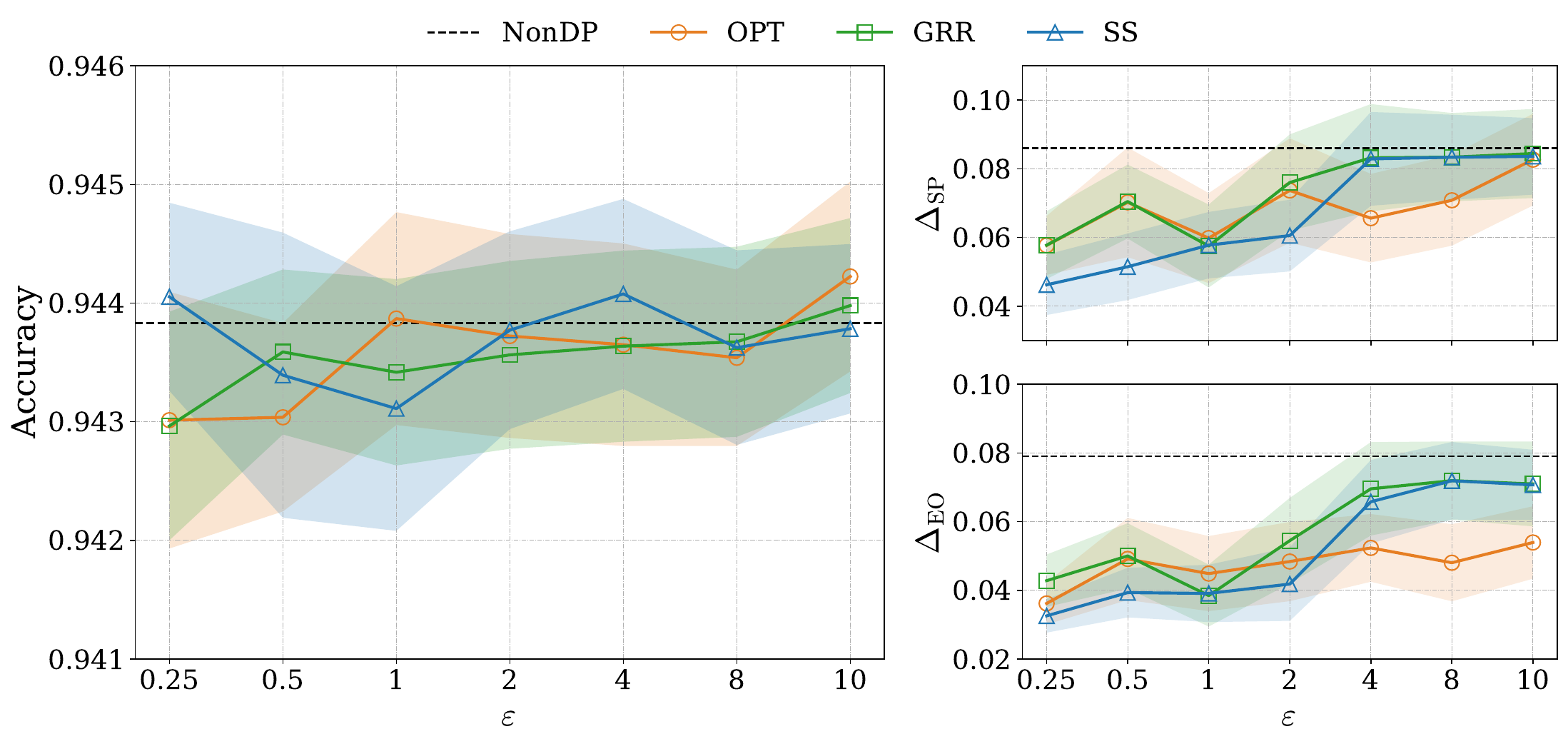}
  \caption{LSAC dataset with 5-level sensitive attribute \texttt{family-income}. 
  Left: accuracy; right: fairness (statistical parity gap and equalized opportunity gap).}
  \label{fig:lsac5}
\end{figure}
\begin{figure}[htpb]
  \centering
  \includegraphics[width=0.7\columnwidth]{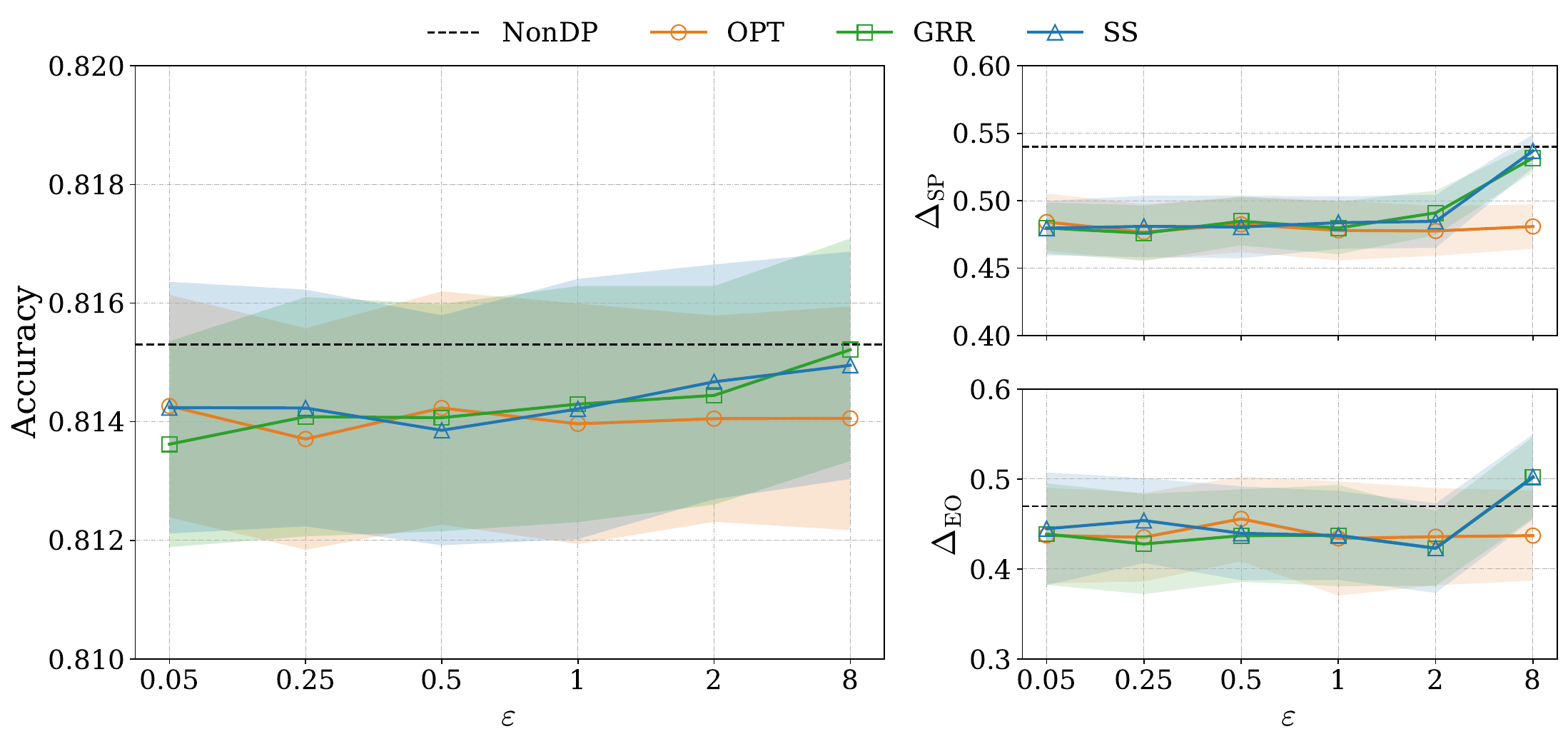}
  \caption{Adult dataset with 10-level sensitive attribute \texttt{race-gender}. 
  Left: accuracy; right: fairness (statistical parity gap and equalized opportunity gap).}
  \label{fig:adult10}
\end{figure}

As shown in Figures~\ref{fig:adult5}--\ref{fig:adult10}, OPT improves fairness while preserving accuracy. For small~\(\varepsilon\), its performance is comparable to GRR and SS in fairness metrics, whereas for larger~\(\varepsilon\), it achieves lower~\(\Delta_{\mathrm{SP}}(\hat{h})\) and~\(\Delta_{\mathrm{EO}}(\hat{h})\) with nearly unchanged accuracy, demonstrating improved fairness--utility trade-offs.
Although a closed-form solution is unavailable for non-binary~\(A\), the numerical OPT effectively reduces unfairness with minimal accuracy loss. As the support size~\(k\) increases, unfairness gaps grow larger, reflecting the greater heterogeneity introduced by a larger number of groups; nevertheless, OPT remains effective even for~\texttt{race--gender} (\(k=10\)), despite the increased computational cost.

\subsection{Comparison under the \cite{mozannar2020fair} Framework}\label{subsec: mozannar exp}

We integrate OPT into the two-step post-processing framework of~\cite{mozannar2020fair}, which is designed to train fair classifiers when only privatized sensitive attributes are available. In their original setup, RR is used to ensure LDP; we replace RR with OPT. The procedure first trains a base classifier using the privatized attributes, then post-processes its predictions to enforce equalized odds. To ensure a fair comparison and demonstrate the adaptability of OPT, we apply OPT in both the pre-processing of sensitive attributes and the post-processing step.

Following their setup, we train logistic regression models using 75\% of the data and test on the remaining 25\%. 
Results are averaged over 20 random splits on the Adult and LSAC datasets (sensitive attribute: \texttt{gender}) under varying~$\varepsilon$. 
Curves report mean performance with 95\% confidence intervals.

\begin{figure}[htpb]
  \centering
  \subfloat[Adult dataset]{%
    \includegraphics[width=0.35\columnwidth]{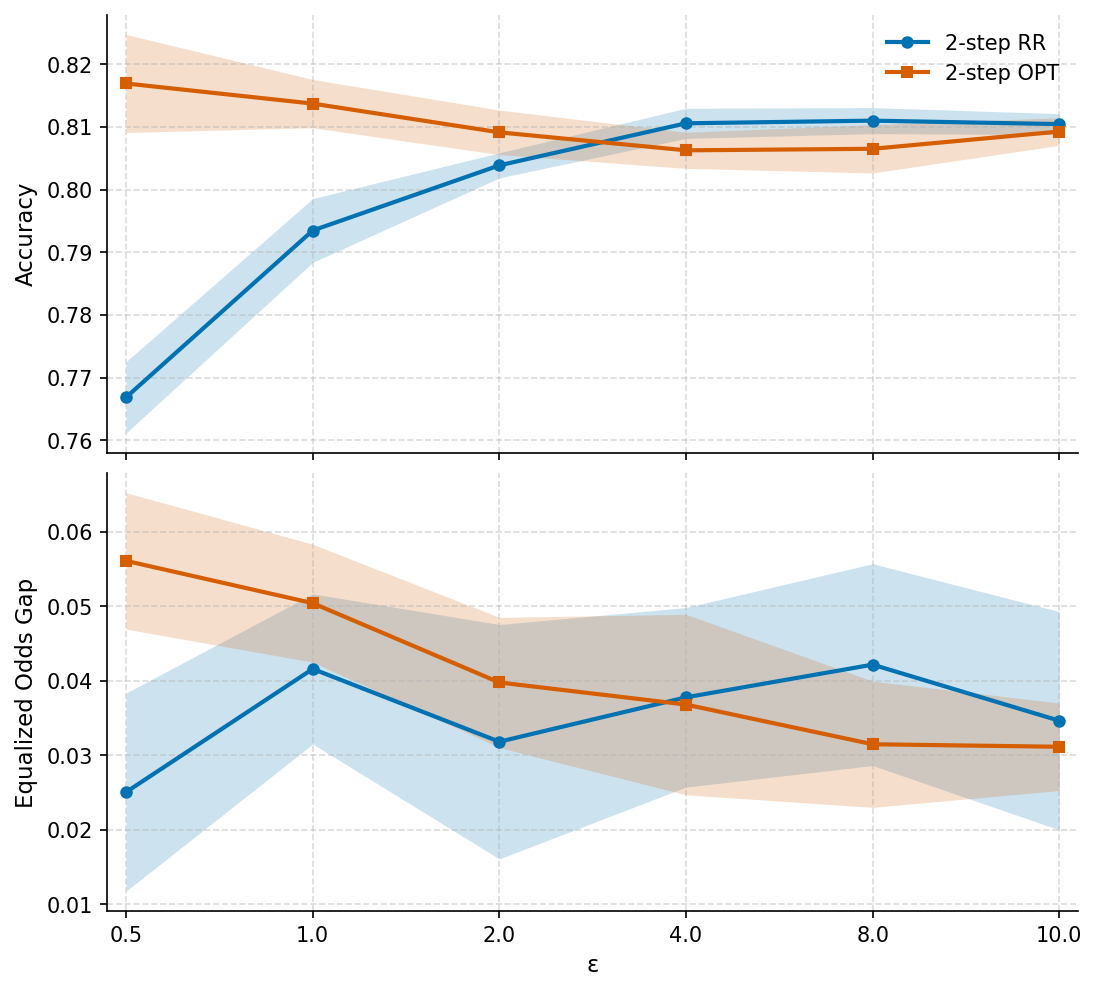}%
    \label{fig:mozannar_adult}}
  \hfil
  \subfloat[LSAC dataset]{%
    \includegraphics[width=0.35\columnwidth]{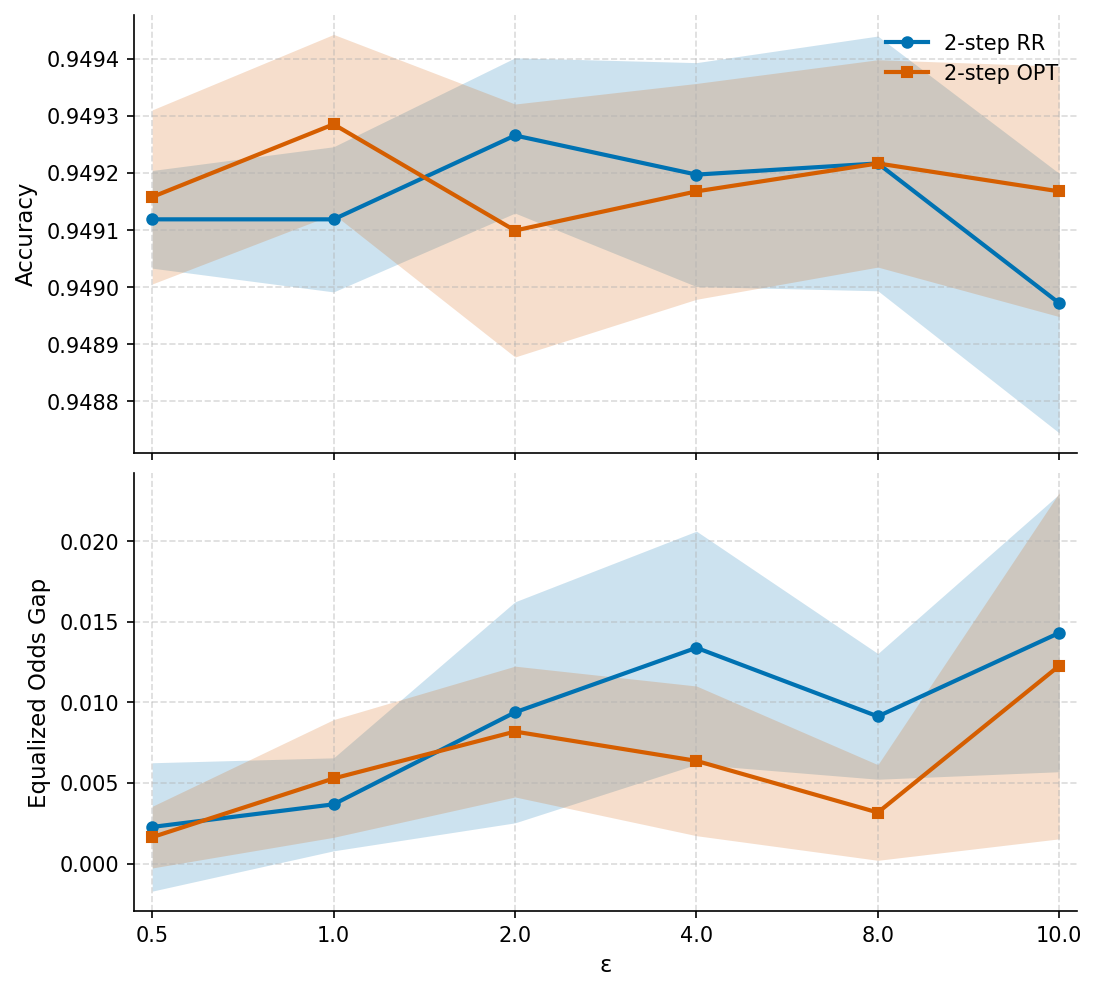}%
    \label{fig:mozannar_lsac}}
  \caption{Comparison of the two-step framework of \cite{mozannar2020fair} using RR and OPT. }
  \label{fig:mozannar_adult_lsac}
\end{figure}

Figure~\ref{fig:mozannar_adult_lsac} shows that on LSAC, OPT and RR achieve similar accuracy, while OPT yields smaller \(\Delta_{\mathrm{EOd}}\) across most~\(\varepsilon\) values. 
On Adult, OPT attains lower~\(\Delta_{\mathrm{EOd}}\) for larger~\(\varepsilon\) with comparable accuracy, and higher accuracy for smaller~\(\varepsilon\) at the cost of increased unfairness. These results indicate that OPT can effectively replace RR in the~\cite{mozannar2020fair} framework.

\subsection{Comparison with Fairness Pre/Post-Processing Methods}\label{subsec: fair projection experiments}

We further compare OPT with two state-of-the-art fairness interventions: 
FairProjection (post-processing) and 
FairBalance (pre-processing). 

\textbf{FairProjection.} FairProjection is a post-processing fairness method that maps a pre-trained, potentially unfair classifier onto the space of models satisfying a chosen group-fairness criterion. The projected model is constructed by post-processing the predictions of the original classifier to enforce the target fairness metric.
Following~\cite{alghamdi2022beyond}, we compare OPT and FairProjection on the Adult and COMPAS~\cite{angwin2022machine} datasets. 
In Adult, the label \(Y\) and the sensitive attribute are defined as before; 
in COMPAS, \(Y\) denotes recidivism with sensitive attribute~\texttt{race}. 
Both methods are evaluated under mean equalized odds and statistical parity constraints using gradient boosting. 
Models use a 70/30 train–test split, averaged over 30 trials with 95\% confidence intervals. 
FairProjection provides a controllable fairness--accuracy trade-off by allowing one to specify a target unfairness gap to be satisfied. To enable a fair comparison with OPT, we evaluate OPT at~$\varepsilon = 1$ and compare it to FairProjection at matched accuracy, reporting the corresponding~$\Delta_{\mathrm{SP}}$ and~$\Delta_{\mathrm{MEO}}$ values (Table~\ref{tab:opt_fp_results}).

\textbf{FairBalance.}
FairBalance is a pre-processing bias mitigation algorithm that balances the class distribution within each demographic group by assigning appropriate weights to training samples. Its goal is to identify the underlying causes of fairness violations and mitigate them without altering the standard training procedure.
Following~\cite{yu2024fairbalance}, we evaluate OPT and FairBalance on the Bank Marketing, Student Performance (Math), and German Credit datasets~\cite{UCI}.
In Bank, \(Y\) indicates term-deposit subscription and the sensitive attribute is~\texttt{age}; 
in Math, \(Y\) denotes final grade above a threshold and the sensitive attribute is~\texttt{gender}; 
in German, \(Y\) indicates credit risk and the sensitive attribute is~\texttt{gender}. 
All models use logistic regression with 70/30 train--test splits, averaged over 30 trials and 95\% confidence intervals. 
We fix \(\varepsilon=1\) and compare OPT to FairBalance, reporting~\(\Delta_{\mathrm{EO}}\) and~\(\Delta_{\mathrm{MEO}}\) (Table~\ref{tab:opt_fb_results}).

\begin{table}[htpb]
  \centering
  \renewcommand{\arraystretch}{1.1}
  \caption{FairProjection vs.\ OPT on multiple datasets.
Accuracy, $\Delta_{\mathrm{SP}}$, and $\Delta_{\mathrm{MEO}}$ reported with 95\% Confidence intervals (percent).}
  \label{tab:opt_fp_results}
  \begin{tabular}{|
    >{\centering\arraybackslash}m{0.25\columnwidth}|
    >{\centering\arraybackslash}m{0.16\columnwidth}|
    >{\centering\arraybackslash}m{0.16\columnwidth}|
    >{\centering\arraybackslash}m{0.16\columnwidth}|}
    \hline
    Dataset–Method & Accuracy & $\Delta_{\mathrm{SP}}$ & $\Delta_{\mathrm{MEO}}$ \\
    \hline
    Adult–FairProjection & $86.1 \pm 0.1\%$ &  $11.5 \pm 0.3\%$ & $6.7 \pm 0.8\%$\\
    \hline
    Adult–OPT ($\eps = 1$) & $86.1 \pm 0.1\%$ & $11.0 \pm 0.3\%$ & $5.8 \pm 0.6\%$ \\
    \hline
    COMPAS–FairProjection & $68.1 \pm 0.3 \%$ & $26.8\pm 0.8 \%$ & $ 22.3\pm 0.8\%$ \\
    \hline
    COMPAS–OPT ($\eps = 1$) & $68.1 \pm 0.3 \%$ & $24.3\pm 1.1 \%$ & $ 20.2\pm 0.9\%$  \\
    \hline
  \end{tabular}
\end{table}
\begin{table}[htpb]
  \centering
  \renewcommand{\arraystretch}{1.1}
  \caption{FairBalance vs.\ OPT on multiple datasets.
Accuracy, $\Delta_{\mathrm{EO}}$, and $\Delta_{\mathrm{MEO}}$ reported with 95\% Confidence intervals (percent).}
  \label{tab:opt_fb_results}
  \begin{tabular}{|
    >{\centering\arraybackslash}m{0.25\columnwidth}|
    >{\centering\arraybackslash}m{0.16\columnwidth}|
    >{\centering\arraybackslash}m{0.16\columnwidth}|
    >{\centering\arraybackslash}m{0.16\columnwidth}|}
    \hline
    Dataset–Method & Accuracy & $\Delta_{\mathrm{EO}}$ & $\Delta_{\mathrm{MEO}}$ \\
    \hline
    Bank–FairBalance        & $84.5 \pm 0.1\%$  & $8.6 \pm 1.4\%$  & $2.3 \pm 0.7\%$ \\
    \hline
    Bank–OPT ($\eps = 1$)   & $90.1 \pm 0.1\%$  & $3.7 \pm 0.9\%$  & $2.7 \pm 0.6\%$ \\
    \hline
    Math–FairBalance        & $91.7 \pm 0.9\%$  & $5.8 \pm 1.7\%$  & $5.2 \pm 1.3\%$ \\
    \hline
    Math–OPT ($\eps = 1$)   & $91.6 \pm 0.9\%$  & $5.1 \pm 1.5\%$  & $5.1 \pm 1.2\%$ \\
    \hline
    German–FairBalance      & $70.4 \pm 0.8\%$  & $19.7 \pm 3.6\%$ & $17.2 \pm 3.1\%$ \\
    \hline
    German–OPT ($\eps = 1$) & $76.3 \pm 0.9\%$  & $19.7 \pm 3.1\%$ & $20.6 \pm 2.8\%$ \\
    \hline
  \end{tabular}
\end{table}
As shown in Table~\ref{tab:opt_fp_results}, OPT evaluated at~$\varepsilon = 1$ achieves up to a 2.5\% improvement in fairness at a fixed accuracy level. 
On the Adult dataset, OPT attains comparable accuracy (86.1\%) to FairProjection while reducing $\Delta_{\mathrm{SP}}$ from 11.5\% to 11.0\% and $\Delta_{\mathrm{MEO}}$ from 6.7\% to 5.8\%. 
A similar trend is observed on COMPAS, where OPT achieves a smaller $\Delta_{\mathrm{SP}}$ (24.3\% vs.\ 26.8\%) and $\Delta_{\mathrm{MEO}}$ (20.2\% vs.\ 22.3\%) at matched accuracy (68.1\%). 

Table~\ref{tab:opt_fb_results} further shows that across all datasets, OPT ($\varepsilon = 1$) consistently attains higher accuracy than FairBalance---for instance, 90.1\% vs.\ 84.5\% on Bank and 76.3\% vs.\ 70.4\% on German---while substantially reducing $\Delta_{\mathrm{EO}}$ or keeping it comparable. 
However, this improvement may come at the cost of a slightly larger $\Delta_{\mathrm{MEO}}$.

\section{Conclusion}\label{sec: conclusion}

Motivated by recent empirical findings on the impact of local differential privacy LDP on fairness in classification, this work introduces an \textit{optimal LDP mechanism} applied to sensitive attributes to minimize data unfairness. 
For binary~$A$, we derive a closed-form expression for the optimal mechanism; for multi-valued~$A$, we provide an equivalent formulation solvable numerically. 
We further show that when DA–optimal classifiers in~$\mathcal{H}^*$ are trained on data with reduced unfairness, the resulting classifiers also exhibit improved fairness (measure by statistical parity). 
These theoretical results are supported by experiments on multiple real-world datasets. 
On the Adult and LSAC datasets, the proposed mechanism effectively reduces classification unfairness while maintaining utility comparable to state-of-the-art LDP mechanisms at a fixed~$\varepsilon$-LDP level. 
Moreover, OPT can directly replace RR mechanism in the fairness framework of~\cite{mozannar2020fair}, enabling fairness-aware learning when only the privatized sensitive attribute is available. 
Finally, our approach achieves a better accuracy–fairness trade-off than both pre- and post-processing fairness interventions, while preserving the privacy of sensitive attributes across the Adult, COMPAS, Bank, Math, and German datasets.

In conclusion, finding the optimal LDP mechanism and demonstrating its effectiveness in reducing classification unfairness reinforces the view that, while central DP can be detrimental to fairness, LDP offers a beneficial alternative. The randomization introduced by LDP has been both theoretically proven and empirically validated as effective in reducing classification unfairness. This paper positions LDP as a simple yet powerful tool for mitigating data unfairness, emphasizing its potential as an approach for enhancing fairness in machine learning.

\bibliographystyle{unsrt}  
\bibliography{references}  

\section{Appendix}\label{sec: appendix}

\begin{lemma}\label{lemma: data metrics are equivalent}
   \( \Delta(D) \leq c_1 \Delta'(D) \) and \( \Delta'(D) \leq c_2 \Delta(D) \) for constants \( c_1 \) and \( c_2 \) dependent on the marginal distribution $P_Y$ of the joint distribution \( P_{XAY} \).
\end{lemma}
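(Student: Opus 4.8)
The plan is to reduce both quantities to expressions in the conditional probabilities $p_{1|a} := \Pr(Y=1\mid A=a)$ and the marginal $\mu := \Pr(Y=1)$, and then exploit the single structural fact that $\mu$ is a convex combination of the $p_{1|a}$'s. Concretely, I would first rewrite the two metrics. Since $\left|\tfrac{p_{1|a}}{\mu}-1\right| = \tfrac{1}{\mu}|p_{1|a}-\mu|$, we have $\Delta(D) = \tfrac{1}{\mu}\max_{a\in[k]}|p_{1|a}-\mu|$. Writing $M := \max_{a} p_{1|a}$ and $m := \min_{a} p_{1|a}$, the second metric is simply $\Delta'(D) = M - m$.

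The key observation is that $\mu = \Pr(Y=1) = \sum_{a} p_a\, p_{1|a}$ is a weighted average of the conditionals with weights $p_a \ge 0$ summing to $1$, hence $m \le \mu \le M$. From this I would derive both directions. For the first bound, note $\max_a |p_{1|a}-\mu| = \max\{M-\mu,\ \mu-m\}$, and each of these two terms is at most $M-m = \Delta'(D)$; therefore $\Delta(D) \le \tfrac{1}{\mu}\Delta'(D)$, giving $c_1 = 1/\Pr(Y=1)$.

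For the reverse bound, I would decompose $\Delta'(D) = M-m = (M-\mu) + (\mu-m)$. Since each summand is nonnegative and bounded by $\max\{M-\mu,\mu-m\} = \mu\,\Delta(D)$, their sum is at most $2\mu\,\Delta(D)$, so $\Delta'(D) \le 2\,\Pr(Y=1)\,\Delta(D)$, giving $c_2 = 2\Pr(Y=1)$. Both constants depend only on $\mu = \Pr(Y=1)$, i.e.\ on the marginal $P_Y$, as claimed.

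I do not anticipate a genuine technical obstacle here: the argument is elementary once the two metrics are expressed through $M$, $m$, and $\mu$. The only point requiring care is the betweenness property $m \le \mu \le M$, which is what makes $\max_a|p_{1|a}-\mu|$ controllable by $M-m$ in both directions; everything else is a short rearrangement. If anything, the mild subtlety is confirming that these constants are tight enough to be meaningful (e.g.\ that the factor $2$ in $c_2$ is unavoidable in general, attained when $\mu$ sits at the midpoint of $[m,M]$), though tightness is not required for the statement.
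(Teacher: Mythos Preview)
Your proposal is correct and follows essentially the same approach as the paper: both arguments hinge on the observation that $\mu=\Pr(Y=1)$ lies between $m=\min_a p_{1|a}$ and $M=\max_a p_{1|a}$, then bound $\max_a|p_{1|a}-\mu|$ above by $M-m$ for the first inequality and use the split $M-m=(M-\mu)+(\mu-m)$ (equivalently, a triangle inequality through $\mu$) for the second, arriving at the identical constants $c_1=1/\Pr(Y=1)$ and $c_2=2\Pr(Y=1)$. Your version is marginally more streamlined in introducing $M$, $m$, $\mu$ up front, but there is no substantive difference.
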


\begin{proof}
For the first part, we have:
    \begin{align*}
    \Delta(D) & = \max_{a \in [k]} \left| \frac{\Pr(Y = 1 \mid A = a)}{\Pr(Y = 1)} - 1 \right| \\
    & = \max_{a \in [k]} \left| \frac{\Pr(Y = 1 \mid A = a) - \Pr(Y = 1)}{\Pr(Y = 1)}  \right| \\
    & \leq \max_{a,a' \in [k]} \left| \frac{\Pr(Y = 1 \mid A = a) - \Pr(Y = 1 \mid A = a')}{\Pr(Y = 1)}  \right| \\
    & = \frac{1}{\Pr(Y = 1)}\Delta'(D), 
\end{align*}
where the third line follows from the fact that $\min\limits_{a' \in [k]}  \Pr(Y = 1 \mid A = a') \leq \Pr(Y = 1 )$.

 Additionally, for the second part, it can be shown that:

\begin{align*}
    \Delta'(D) & = \max_{a,a' \in [k]} \Bigl| \Pr(Y = 1 \mid A = a) - \Pr(Y = 1 \mid A = a')\Bigr| \\
    & = \max_{a,a' \in [k]} \Bigl| \Pr(Y = 1 \mid A = a) - \Pr(Y = 1) + \Pr(Y = 1) - \Pr(Y = 1 \mid A = a')\Bigr| \\
    & \leq \max_{a,a' \in [k]} \Biggl[ \Bigl| \Pr(Y = 1 \mid A = a) - \Pr(Y = 1)\Bigr| + \Bigl| \Pr(Y = 1 \mid A = a') - \Pr(Y = 1) \Bigr| \Biggr]  \\
    & \leq 2 \max_{a \in [k]} \Bigl| \Pr(Y = 1 \mid A = a) - \Pr(Y = 1)\Bigr| \\
    & = 2  \Pr(Y = 1) \Delta(D).
\end{align*}

It follows that $c_1 = \frac{1}{\Pr(Y = 1)}$ and $c_2 = 2  \Pr(Y = 1)$, where $c_1$ and $c_2$ only depend on the marginal distribution $P_Y$ of the joint distribution \( P_{XAY} \).
\end{proof}

\begin{proof}[Proof of Lemma~\ref{lemma: RR makes data fairer}]

We know: \begin{equation*}
    \Delta'(D) := \max_{a,a' \in [k]} \Bigl| \Pr(Y = 1 \mid A = a) - \Pr(Y = 1 \mid A = a')\Bigr| = \max_{a \in [k]}  p_{1|a} -  \min_{a \in [k]}p_{1|a}.
\end{equation*}

\textbf{Proof Sketch:} W.L.O.G., we assume $\max_{a \in [k]}  p_{1|a} = p_{1|k}$ and  $\min_{a \in [k]}  p_{1|a} = p_{1|1}$. Therefore, we have $\Delta'(D) = p_{1|k} - p_{1|1}$. Similarly, we can express $\Delta'(D^{\varepsilon}_{GRR})$ as:
\begin{align*}
    \Delta'(D^{\varepsilon}_{GRR}) = \max_{a \in [k]}  \Pr(Y=1 \,|\, Z = a) -  \min_{a \in [k]}\Pr(Y=1 \,|\, Z = a).
\end{align*}
For each $a \in [k]$, we prove $p_{1|1} \leq \Pr(Y=1 \,|\, Z = a) \leq p_{1|k}$. This means that $\max_{a \in [k]}  \Pr(Y=1 \,|\, Z = a) \leq p_{1|k}$ and $p_{1|1} \leq \min_{a \in [k]}\Pr(Y=1 \,|\, Z = a)  $, concluding ~$\Delta'(D^{\varepsilon}_{GRR})~\leq~\Delta'(D)$.

$\forall a \in [k]$, We have:
\begin{align*}
        \Pr(Y=1 \,|\, Z = a)   & =  \frac{\Pr(Y=1 , Z = a)}{\Pr(Z = a)} \\
        & = \frac{ \sum\limits_{j = 1}^k \Pr(Y=1 , Z = a, A = j)}{\sum\limits_{j = 1}^k \Pr(Z = a,A = j)} \\
        & = \frac{\sum\limits_{j = 1}^k \Pr(Y=1 , A = j)\Pr(Z = a \,|\, A = j, Y=1) }{\sum\limits_{j = 1}^k \Pr(Z = a \,|\, A = j) \Pr(A = j)} \\
        & = \frac{ \sum\limits_{j = 1}^k \Pr(Y=1 , A = j)\Pr(Z = a \,|\, A = j) }{\sum\limits_{j = 1}^k \Pr(Z = a \,|\, A = j) \Pr(A = j)} \\
        & = \frac{\sum\limits_{j = 1}^k \Pr(Y=1 \,|\, A = j)\Pr(A = j)\Pr(Z = a \,|\, A = j) }{\sum\limits_{j = 1}^k \Pr(Z = a \,|\, A = j) \Pr(A = j) }  \\
        & = \frac{p_{1|a} p_a \pi + \sum\limits_{\substack{j \in [k] \\ j \neq a}} p_{1|j} p_j \bar{\pi} }{p_a \pi + \sum\limits_{\substack{j \in [k] \\ j \neq a}} p_j \bar{\pi}}.
\end{align*}
We know $p_{1|1} \leq p_{1|j} \leq p_{1|k}$ for all values of $j \in [k]$. Therefore, it can be obtained that:

\begin{equation*}
    \frac{p_{1|1} p_a \pi + \sum\limits_{\substack{j \in [k] \\ j \neq a}} p_{1|1} p_j \bar{\pi} }{p_a \pi + \sum\limits_{\substack{j \in [k] \\ j \neq a}} p_j \bar{\pi}} \leq 
    \frac{p_{1|a} p_a \pi + \sum\limits_{\substack{j \in [k] \\ j \neq a}} p_{1|j} p_j \bar{\pi} }{p_a \pi + \sum\limits_{\substack{j \in [k] \\ j \neq a}} p_j \bar{\pi}} \leq 
    \frac{p_{1|k} p_a \pi + \sum\limits_{\substack{j \in [k] \\ j \neq a}} p_{1|k} p_j \bar{\pi} }{p_a \pi + \sum\limits_{\substack{j \in [k] \\ j \neq a}} p_j \bar{\pi}}.
\end{equation*}
Thus, we have:
\begin{equation*}
   p_{1|1}\leq 
    \frac{p_{1|a} p_a \pi + \sum\limits_{\substack{j \in [k] \\ j \neq a}} p_{1|j} p_j \bar{\pi} }{p_a \pi + \sum\limits_{\substack{j \in [k] \\ j \neq a}} p_j \bar{\pi}} \leq 
    p_{1|k} \hspace{0.5cm } \longrightarrow  \hspace{0.5cm}p_{1|1}\leq 
     \Pr(Y=1 \,|\, Z = a) \leq 
    p_{1|k} \hspace{0.5cm} \forall a \in [k].
\end{equation*}
It follows that:
\begin{align*}
     \Delta'(D^{\varepsilon}_{GRR}) & = \max_{a \in [k]}  \Pr(Y=1 \,|\, Z = a) -  \min_{a \in [k]}\Pr(Y=1 \,|\, Z = a) \\
     & \leq p_{1|k} - p_{1|1} \\
     & = \Delta'(D).
\end{align*}
Now the proof is complete. 
\end{proof}

\begin{proof}[Proof of Theorem \ref{thm: binary optimal mechanism}]
We know: \begin{equation*}
        \Delta'(D) = \max_{a,a' \in \{0,1\}} \Bigl| \Pr(Y = 1 \mid A = a) - \Pr(Y = 1 \mid A = a')\Bigr|. 
\end{equation*}
Similarly, we can express $\Delta'(D_{M})$ as:
\begin{align*}
    \Delta'(D_{M}) = \max_{a,a' \in \{0,1\}} \Bigl| \Pr(Y = 1 \mid Z = a) - \Pr(Y = 1 \mid Z = a')\Bigr|.
\end{align*}
\textbf{Proof Sketch:} We assume that $\max_{a \in \{0,1\}}  p_{1|a} = p_{1|1}$ and  $\min_{a \in \{0,1\}}  p_{1|a} = p_{1|0}$. Therefore, we have $\Delta'(D) = p_{1|1} - p_{1|0}$. 
Proof consists of three steps. In the first step, we find the set of feasible values of $p$ and $q$ such that the mechanism $M$ satisfies $\eps$-LDP. In the second step, we prove that $\Pr(Y=1 \,|\, Z = 0) \leq \Pr(Y=1 \,|\, Z = 1)$. Finally, in the third step, we determine the values of $p$ and $q$ that minimize $  
  \frac{\Delta'(D_{M})}{\Delta'(D)}$ over all mechanisms $M$ such that $\varepsilon^\star(M) = \varepsilon$.

\textbf{Step 1:} 
We start by assuming that \( \frac{1}{2} \leq p \leq 1 \) and \( \frac{1}{2} \leq q \leq 1 \). This is important because simply replacing the sensitive attribute of each individual randomly with \( 0 \) or \( 1 \) (with a misreporting probability of \( \frac{1}{2} \)) would satisfy \( 0 \)-LDP, but it would severely compromise utility. Therefore, to ensure non-trivial utility, it is necessary that \( \frac{1}{2} \leq p \leq 1 \) and \( \frac{1}{2} \leq q \leq 1 \). Also, by definition of an $\eps$-LDP mechanism, we have:
\begin{align*}
    (1-p) - e^\eps q & \leq 0 \\ 
   p - e^\eps(1-q) & \leq 0 \\
   q - e^\eps (1-p) & \leq 0 \\
    (1-q) - e^\eps p & \leq 0.
\end{align*}
From $\frac{1}{2} \leq p \leq  1$  and  $\frac{1}{2} \leq q \leq  1$ directly follows that $(1-p) - e^\eps q \leq 0 $ and $ (1-q) - e^\eps p \leq  0$. However, to satisfy $p - e^\eps(1-q) \leq 0  $ and $ q - e^\eps (1-p) \leq  0$ we should have:
\begin{equation*}
    p - e^\eps(1-q) \leq 0 \implies p \leq e^\eps(1-q).
\end{equation*}
And
\begin{equation*}
    q - e^\eps (1-p) \leq 0 \implies q \leq e^\eps(1-p).
\end{equation*}
In conclusion, any mechanism $M$ that guarantees $\eps$-LDP and non-trivial utility should satisfy the following inequalities:
\begin{equation*}
    \frac{1}{2} \leq p \leq  1, \hspace{0.5cm}
 \frac{1}{2} \leq q \leq  1, \hspace{0.5cm} p \leq e^\eps(1-q), \hspace{0.5cm} q \leq e^\eps(1-p).
\end{equation*}

\textbf{Step 2:}  $\forall a \in \{0,1\}$, We have:
\begin{align*}
        \Pr(Y=1 \,|\, Z = a)   & =  \frac{\Pr(Y=1 , Z = a)}{\Pr(Z = a)} \\
        & = \frac{ \sum\limits_{j = 0}^1 \Pr(Y=1 , Z = a, A = j)}{\sum\limits_{j = 0}^1 \Pr(Z = a,A = j)} \\
        & = \frac{\sum\limits_{j = 0}^1 \Pr(Y=1 , A = j)\Pr(Z = a \,|\, A = j, Y=1) }{\sum\limits_{j = 0}^1 \Pr(Z = a \,|\, A = j) \Pr(A = j)} \\
        & = \frac{ \sum\limits_{j = 0}^1 \Pr(Y=1 , A = j)\Pr(Z = a \,|\, A = j) }{\sum\limits_{j = 0}^1 \Pr(Z = a \,|\, A = j) \Pr(A = j)} \\
        & = \frac{\sum\limits_{j = 0}^1 \Pr(Y=1 \,|\, A = j)\Pr(A = j)\Pr(Z = a \,|\, A = j) }{\sum\limits_{j = 0}^1 \Pr(Z = a \,|\, A = j) \Pr(A = j) } .
\end{align*}

Therefore, we have:
\begin{equation*}
     \Pr(Y=1 \,|\, Z = 0) = \frac{p_{1|0} p_0 p + p_{1|1}p_1(1-q)}{p_0 p + p_1 (1-q)},
\end{equation*}
And
\begin{equation*}
     \Pr(Y=1 \,|\, Z = 1) = \frac{p_{1|0} p_0 (1-p) + p_{1|1}p_1(q)}{p_0 (1-p) + p_1 q}.
\end{equation*}

To demonstrate that $\Pr(Y=1 \,|\, Z = 0) \leq \Pr(Y=1 \,|\, Z = 1) $, we need to prove the following inequality:
\begin{equation*}
\biggl(p_0 (1-p) + p_1 q\biggr)\biggl( p_{1|0} p_0 p + p_{1|1}p_1(1-q)\biggr) \leq \biggl(p_0 p + p_1 (1-q)\biggr)\biggl( p_{1|0} p_0 (1-p) + p_{1|1}p_1(q)\biggr).
\end{equation*}

Expanding both sides of the inequality, we have:
\begin{equation*}
\begin{aligned}
 & p_{1|0} (p_0)^2 p(1-p) + p_{1|1} p_0 p_1 (1-p) (1-q) 
 + p_{1|0}  p_0 p_1 p q   + p_{1|1} (p_1)^2 q (1-q) \leq \\
 & \hspace{2.7cm}p_{1|0} (p_0)^2 p(1-p) + p_{1|1} p_0 p_1 p q 
 + p_{1|0} p_0 p_1 (1-q)   (1-p) + p_{1|1} (p_1)^2 q (1-q).
\end{aligned}
\end{equation*}

Simplifying this, we get:
\begin{equation*}
p_{1|1}p_0p_1(1-p)(1-q) + p_{1|0}p_0p_1pq \leq p_{1|1}p_0p_1pq + p_{1|0}p_0p_1(1-p)(1-q).
\end{equation*}
This simplifies further to:

\begin{equation*}
p_{1|1}(1-p)(1-q) + p_{1|0}pq \leq p_{1|1}pq + p_{1|0}(1-p)(1-q).
\end{equation*}

Rearranging, we obtain:
\begin{align*}
& p_{1|0}pq - p_{1|0}(1-p)(1-q) \leq p_{1|1}pq - p_{1|1}(1-p)(1-q)  \\ & \hspace{3cm} \iff
 p_{1|0} \bigl( pq - (1-p)(1-q)\bigr) \leq p_{1|1} \bigl( pq - (1-p)(1-q)\bigr).
\end{align*}

This inequality holds true because $p_{1|0} \leq p_{1|1}$ and $\bigl( pq - (1-p)(1-q)\bigr) \geq 0$. Therefore, we can conclude that $\Pr(Y=1 \,|\, Z = 0) \leq \Pr(Y=1 \,|\, Z = 1) $. 

\textbf{Step 3:}  From steps 1 and 2, we can conclude that the problem we are interested in becomes the following optimization problem.

\begin{align}
\label{eq:opt_problem}
        & \min\limits_{p,q} \dfrac{\dfrac{p_{1|0} p_0 (1-p) + p_{1|1}p_1(q)}{p_0 (1-p) + p_1 q} - \dfrac{p_{1|0} p_0 p + p_{1|1}p_1(1-q)}{p_0 p + p_1 (1-q)}}{p_{1|1} - p_{1|0}} \nonumber \\
        & s.t. \hspace{1cm} \frac{1}{2} \leq p \leq  1, \hspace{0.5cm}
 \frac{1}{2} \leq q \leq  1, \\
 & \hspace{1cm} \hspace{0.6cm} p \leq e^\eps(1-q), \hspace{0.5cm} q \leq e^\eps(1-p) \nonumber.
\end{align}

Now, we simplify the objective function of the optimization problem above.

\begin{align}\label{equiivalent_optimization_problem_def1}
    \hspace{2cm} & \hspace{-2cm} \dfrac{\dfrac{p_{1|0} p_0 (1-p) + p_{1|1}p_1(q)}{p_0 (1-p) + p_1 q} - \dfrac{p_{1|0} p_0 p + p_{1|1}p_1(1-q)}{p_0 p + p_1 (1-q)}}{p_{1|1} - p_{1|0}} \nonumber\\
    &  = \dfrac{p_{1|0}p_0p_1(1-p)(1-q) + p_{1|1}p_0p_1pq - p_{1|0}p_0p_1pq - p_{1|1}p_0p_1(1-p)(1-q)}{\bigl( p_0 (1-p) + p_1 q \bigr) \bigl( p_0 p + p_1 (1-q)\bigr)\bigl(p_{1|1} - p_{1|0}\bigr)}  \nonumber\\
    & = \dfrac{\bigl(p_0p_1\bigr)\bigl(p_{1|0}(1-p)(1-q) + p_{1|1}pq - p_{1|0}pq - p_{1|1}(1-p)(1-q)\bigr)}{\bigl( p_0 (1-p) + p_1 q \bigr) \bigl( p_0 p + p_1 (1-q)\bigr)\bigl(p_{1|1} - p_{1|0}\bigr)}  \nonumber\\
    & = \dfrac{\bigl(p_0p_1\bigr)\bigl(p_{1|1}-p_{1|0}\bigr)\bigl(pq-(1-p)(1-q)\bigr)}{\bigl( p_0 (1-p) + p_1 q \bigr) \bigl( p_0 p + p_1 (1-q)\bigr)\bigl(p_{1|1} - p_{1|0}\bigr)} \nonumber\\
    & = \dfrac{\bigl(p_0p_1\bigr)\bigl(pq-(1-p)(1-q)\bigr)}{\bigl( p_0 (1-p) + p_1 q \bigr) \bigl( p_0 p + p_1 (1-q)\bigr)}.
\end{align}

Since the optimization is over the parameters $p$ and $q$, \eqref{eq:opt_problem} is equivalent to the following optimization problem:
\begin{align}
\label{eq:opt_problem2}
        & \min\limits_{p,q} \dfrac{pq-(1-p)(1-q)}{\bigl( p_0 (1-p) + p_1 q \bigr) \bigl( p_0 p + p_1 (1-q)\bigr)} \nonumber \\
        & s.t. \hspace{1cm} \frac{1}{2} \leq p \leq  1, \hspace{0.5cm}
 \frac{1}{2} \leq q \leq  1,  \\
 & \hspace{1cm} \hspace{0.6cm} p \leq e^\eps(1-q), \hspace{0.5cm} q \leq e^\eps(1-p) \nonumber.
\end{align}

Suppose~$\varepsilon$ is given and we aim to design the optimal $\varepsilon$-LDP mechanism. 
Let~$p$ and~$q$ denote the probabilities of truthfully reporting the sensitive attribute of an individual when the original attribute is~$0$ and~$1$, respectively. 
For a fixed~$\varepsilon$, higher values of~$p$ and~$q$ yield improved utility. 
Let~$p < e^\varepsilon(1 - q)$ and~$q < e^\varepsilon(1 - p)$. 
In this case, for the given~$\varepsilon$, we can jointly increase~$p$ and~$q$ until one of the inequalities becomes tight; that is, we have either~$p = e^\varepsilon(1 - q)$ and~$q \le e^\varepsilon(1 - p)$, or~$p \le e^\varepsilon(1 - q)$ and~$q = e^\varepsilon(1 - p)$. 
Therefore, we are only interested in those pairs~$(p,q)$ that satisfy one of these two constraints. 
Note that these constraints guarantee non-trivial utility. 
Specifically, for a fixed~$\varepsilon$, we consider the optimal achievable utility—corresponding to the minimum possible probability of misrepresenting the sensitive attribute. 
This aligns with our initial goal of optimizing the objective function 
\[
\min\limits_{M :\, \varepsilon^\star(M) = \varepsilon} 
  \frac{\Delta'(D_{M})}{\Delta'(D)}.
\]
These boundary conditions fully characterize the optimal trade-off between privacy and utility, as any strictly interior point~$(p,q)$ can be improved without violating the $\varepsilon$-LDP constraints.

We consider two cases. in case 1 we have: $\bigl(p = e^\eps(1-q) \hspace{0.1cm}\text{and} \hspace{0.1cm} q \leq e^\eps(1-p) \bigr)$ and in case 2 we have $\bigl( q = e^\eps(1-p) \hspace{0.1cm}\text{and} \hspace{0.1cm} p \leq e^\eps(1-q)\bigr)$.

\textbf{Case 1:} Here we have $p = e^\eps(1-q)$. The objective function becomes:

\begin{align*}
        & \min\limits_{q} \dfrac{e^\eps(1-q)q-(1 - e^\eps(1-q))(1-q)}{\bigl( p_0 (1 - e^\eps(1-q)) + p_1 q \bigr) \bigl( p_0 e^\eps(1-q) + p_1 (1-q)\bigr)} \\
        & s.t. \hspace{1cm} \frac{1}{2} \leq e^\eps(1-q) \leq 1, \hspace{0.5cm}
 \frac{1}{2} \leq q \leq  1, \\
 & \hspace{1.6cm}  q \leq e^\eps (1 - e^\eps(1-q)). 
\end{align*}

Rearranging the constraints, we can rewrite the objective function of case 1 as:
\begin{align}
\label{eqn:case1}
        & \min\limits_{q} \dfrac{e^\eps(1-q)q-(1 - e^\eps(1-q))(1-q)}{\bigl( p_0 (1 - e^\eps(1-q)) + p_1 q \bigr) \bigl( p_0 e^\eps(1-q) + p_1 (1-q)\bigr)} \nonumber\\
        & s.t. \hspace{1cm} 1 - e^{-\eps} \leq q \leq 1 - \frac{e^{-\eps}}{2}, \hspace{0.5cm}
 \frac{1}{2} \leq q \leq  1, \\
 & \hspace{1.6cm}  q \geq \frac{e^\eps}{e^\eps +1}  \nonumber .
\end{align}

Since $1-e^{-\eps} < \frac{e^\eps}{e^\eps + 1}$, we can further simplify the objective function as:

\begin{align*}
        & \min\limits_{q} \dfrac{e^\eps(1-q)q-(1 - e^\eps(1-q))(1-q)}{\bigl( p_0 (1 - e^\eps(1-q)) + p_1 q \bigr) \bigl( p_0 e^\eps(1-q) + p_1 (1-q)\bigr)} \\
        & s.t. \hspace{1cm} \frac{e^\eps}{e^\eps + 1} \leq q \leq 1 - \frac{e^{-\eps}}{2}, \hspace{0.5cm}
 \frac{1}{2} \leq q \leq  1.
\end{align*}
Now, we focus on the objective function of the problem \eqref{eqn:case1}. 

\begin{align*}
    & \dfrac{e^\eps(1-q)q-(1 - e^\eps(1-q))(1-q)}{\bigl( p_0 (1 - e^\eps(1-q)) + p_1 q \bigr) \bigl( p_0 e^\eps(1-q) + p_1 (1-q)\bigr)} \\
    & \hspace{2cm} = \dfrac{\bigl(1-q\bigr) \bigl( e^\eps q-(1 - e^\eps(1-q))\bigr)}{\bigl( p_0 (1 - e^\eps(1-q)) + p_1 q \bigr) \bigl( p_0 e^\eps + p_1 \bigr)\bigl(1-q\bigr)}  \\
    & \hspace{2cm} = \dfrac{ \bigl( e^\eps q-(1 - e^\eps(1-q))\bigr)}{\bigl( p_0 (1 - e^\eps(1-q)) + p_1 q \bigr) \bigl( p_0 e^\eps + p_1 \bigr)} \\
    & \hspace{2cm} = \dfrac{ \bigl( e^\eps -1 \bigr)}{\bigl( p_0 (1 - e^\eps(1-q)) + p_1 q \bigr) \bigl( p_0 e^\eps + p_1 \bigr)} .
\end{align*}

Given the constraints that $ \frac{e^\epsilon}{e^\epsilon +1} \leq q \leq 1 - \frac{e^{-\epsilon}}{2}$ and $\frac{1}{2} \leq q \leq  1$, since the optimization is over the variable $q$, we have:

\begin{align*}
    \argmin\limits_{q} \hspace{0.2cm} \dfrac{ \bigl( e^\epsilon -1 \bigr)}{\bigl( p_0 (1 - e^\epsilon(1-q)) + p_1 q \bigr) \bigl( p_0 e^\epsilon + p_1 \bigr)} & =  \argmin\limits_{q} \hspace{0.2cm}\dfrac{ 1}{\bigl( p_0 (1 - e^\epsilon(1-q)) + p_1 q \bigr) } \\
    & = \argmax\limits_{q} \hspace{0.2cm} \bigl( p_0 (1 - e^\epsilon(1-q)) + p_1 q \bigr) \\
    & = \argmax\limits_{q} \hspace{0.2cm} \bigl( p_0 - p_0 e^{\epsilon} + q (p_0 e^{\epsilon} + p_1) \bigr) \\
    & = \argmax\limits_{q} \hspace{0.2cm} q.
\end{align*}

Given the constraints $ \frac{e^\epsilon}{e^\epsilon +1} \leq q \leq 1 - \frac{e^{-\epsilon}}{2}$ and $\frac{1}{2} \leq q \leq  1$, the optimal value of $q$ will be $q^* = 1 - \frac{e^{-\epsilon}}{2}$. Therefore, in Case 1, the optimal solution is at $(p,q) = \left(\frac{1}{2}, 1 - \frac{e^{-\epsilon}}{2}\right)$. Similarly, in Case 2, the optimal value occurs at $(p,q) = \left(1 - \frac{e^{-\epsilon}}{2}, \frac{1}{2}\right)$. Hence, the minimum value arises from either of these cases. Now we show when each of the cases is optimal. In order for Case 1 to be the optimal value we should have:

\begin{equation*}
      \dfrac{\frac{1}{2}(1-\frac{e^{-\eps}}{2})-\frac{1}{2}(\frac{e^{-\eps}}{2})}{\bigl( p_0 (\frac{1}{2}) + p_1 (1-\frac{e^{-\eps}}{2}) \bigr) \bigl( p_0 (\frac{1}{2}) + p_1 (\frac{e^{-\eps}}{2})\bigr)} \leq \dfrac{(1-\frac{e^{-\eps}}{2})(\frac{1}{2})-(\frac{e^{-\eps}}{2})(\frac{1}{2})}{\bigl( p_0 (\frac{e^{-\eps}}{2}) + p_1 (\frac{1}{2}) \bigr) \bigl( p_0 (1-\frac{e^{-\eps}}{2}) + p_1 (\frac{1}{2})\bigr)}.
\end{equation*}

By rearranging, the inequality becomes:
 \begin{equation*}
     {\Bigl( p_0 (\frac{e^{-\eps}}{2}) + p_1 (\frac{1}{2}) \Bigr) \Bigl( p_0 (1-\frac{e^{-\eps}}{2}) + p_1 (\frac{1}{2})\Bigr)} \leq {\Bigl( p_0 (\frac{1}{2}) + p_1 (1-\frac{e^{-\eps}}{2}) \Bigr) \Bigl( p_0 (\frac{1}{2}) + p_1 (\frac{e^{-\eps}}{2})\Bigr)}.
 \end{equation*}

Therefore, we must have:
 \begin{equation*}
     p_0^2 (\frac{e^{-\eps}}{2})(1 - \frac{e^{-\eps}}{2}) + p_1^2(\frac{1}{4}) \leq  p_1^2 (\frac{e^{-\eps}}{2})(1 - \frac{e^{-\eps}}{2}) + p_0^2(\frac{1}{4}).
 \end{equation*}
Or, equivalently:
\begin{equation*}
    p_1^2 \leq p_0^2 \implies p_1 \leq p_0.
\end{equation*}
 
Specifically, when $p_0 < p_1$, the optimal $(p,q)$ is $\left(1 - \frac{e^{-\epsilon}}{2}, \frac{1}{2}\right)$, and when $p_1 < p_0$, it is $(p,q) = \left(\frac{1}{2}, 1 - \frac{e^{-\epsilon}}{2}\right)$. If $p_0 = p_1$, then both $\left(1 - \frac{e^{-\epsilon}}{2}, \frac{1}{2}\right)$ and  $\left(1 - \frac{e^{-\epsilon}}{2}, \frac{1}{2}\right)$ will be optimal solutions. We know that in randomized response $p = q = \frac{e^\eps}{e^\eps + 1}$ which is clearly not the optimal solution.

\end{proof} \vspace{0.5cm}

\begin{proof}[Proof of Theorem \ref{thm: data fairness leads to classification fairness}]

Let us recall the definitions from the main text that are used in this proof. The definition of $\Delta'(D)$ and $ \Delta_{\mathrm{SP}}(\hat{h})$ are as follows:
\begin{equation*}
    \Delta'(D) = \max_{a,a' \in \{0,1\}} \left| \Pr(Y = 1 \mid A = a) - \Pr(Y = 1 \mid A = a')\right|.
\end{equation*}
\begin{equation*}
    \Delta_{\mathrm{SP}}(\hat{h}) = \max_{a,a' \in \{0,1\}} \left| \Pr(\hat{Y} = 1 \mid A = a) - \Pr(\hat{Y} = 1 \mid A = a') \right|.
\end{equation*}
 
We know that $\Delta'(P_{XAY}) \leq \Delta'(Q_{XAY}) $ and we want to prove that $\Delta_{\mathrm{SP}}(h_P) \leq \Delta_{\mathrm{SP}}(h_Q)$.

In order to prove this theorem, we refer to Theorem 1 in \cite{kamiran2012data}.

Theorem 1: A classifier $h$ is DA-optimal in $\mathcal{H}^*$ iff
$$
\operatorname{acc}\left(h^{\text {Perf }}\right)-\operatorname{acc}(h)=\frac{2 n_0 n_1}{(n_0 + n_1)^2}\left(\Delta_{\mathrm{SP}}\left(h^{\text {Perf }}\right)-\Delta_{\mathrm{SP}}(h)\right),
$$
where  \( \mathcal{H}^* \) denotes the class of all classifiers satisfying \( \Pr(\hat{Y} = 1) = \Pr(Y = 1) \). $h^{\text {Perf }}$ refers to a perfect classifier, and $n_0$ and $n_1$ denote number of data points with $A = 0$ and $A=1$ on a dataset used for training respectively.  Note that as mentioned in Section \ref{subsec: data fairness leads to classification fairness}, statistical parity gap is used as a discrimination metric in the DA-optimality definition.

Since $\Pr_{(X,A,Y)\sim P_{XAY} } (A=a) = \Pr_{(X,A,Y)\sim Q_{XAY} } (A=a)$ for $a \in \{0,1\}$, it follows that $\frac{2 n_0 n_1}{(n_0 + n_1)^2}$ is a fixed value for two distributions $P_{XAY}$ and $Q_{XAY}$. We denote this constant by $c$.
In addition, since $h_P$ and $h_Q$ are DA-optimal classifiers in $\mathcal{H}^*$, we have:
\begin{align*}
    & 1 - \operatorname{acc}(h_P) = c \left(\Delta'\left(P_{XAY}\right)-\Delta_{\mathrm{SP}}(h_P)\right) \\
     & 1 - \operatorname{acc}(h_Q) = c \left(\Delta'\left(Q_{XAY}\right)-\Delta_{\mathrm{SP}}(h_Q)\right).
\end{align*}

Note that we used $\Delta'\left(P_{XAY}\right)$ and  $\Delta'\left(Q_{XAY}\right)$ rather than $\Delta_{\mathrm{SP}}\left(h^{\text {Perf }}\right)$ since the perfect classifier identically represents the unfairness of the data distribution.

Since we have assumed that the accuracy of learned classifiers $h_P$ and $h_Q$ are equal, we have:
\begin{align*}
    \left(\Delta'\left(P_{XAY}\right)-\Delta_{\mathrm{SP}}(h_P)\right) = \left(\Delta'\left(Q_{XAY}\right)-\Delta_{\mathrm{SP}}(h_Q)\right).
\end{align*}

From this equation, we can conclude that if $\Delta'\left(P_{XAY} \right)\leq \Delta'\left(Q_{XAY}\right)$, then $\Delta_{\mathrm{SP}}(h_P) \leq \Delta_{\mathrm{SP}}(h_Q)$.
    
\end{proof}

     












\end{document}